\providecommand{\tabularnewline}{\\}
\providecommand{\algorithmname}{Algorithm}
\numberwithin{equation}{section}
\numberwithin{figure}{section}
\theoremstyle{plain}
\newtheorem{thm}{\protect\theoremname}
  \theoremstyle{remark}
  \newtheorem{rem}[thm]{\protect\remarkname}
  \theoremstyle{definition}
  \newtheorem{example}[thm]{\protect\examplename}
  \theoremstyle{plain}
  \newtheorem{cor}[thm]{\protect\corollaryname}
  \theoremstyle{definition}
  \newtheorem{defn}[thm]{\protect\definitionname}
  \theoremstyle{plain}
  \newtheorem{lem}[thm]{\protect\lemmaname}
  \theoremstyle{plain}
  \newtheorem{prop}[thm]{\protect\propositionname}
  \theoremstyle{remark}
  \newtheorem{acknowledgement}[thm]{\protect\acknowledgementname}
  \providecommand{\acknowledgementname}{Acknowledgement}
  \providecommand{\corollaryname}{Corollary}
  \providecommand{\definitionname}{Definition}
  \providecommand{\examplename}{Example}
  \providecommand{\lemmaname}{Lemma}
  \providecommand{\propositionname}{Proposition}
  \providecommand{\remarkname}{Remark}
\providecommand{\theoremname}{Theorem}
\begin{document}

\title{sketching the order of events }

\author{Terry Lyons}

\email{tlyons@maths.ox.ac.uk}

\author{Harald Oberhauser}

\email{oberhauser@maths.ox.ac.uk}

\address{Mathematical Institute, University of Oxford}
\begin{abstract}
We introduce features for massive data streams. These stream features
can be thought of as ``ordered moments'' and generalize stream sketches
from ``moments of order one'' to ``ordered moments of arbitrary
order''. In analogy to classic moments, they have theoretical guarantees
such as universality that are important for learning algorithms.  
\end{abstract}

\maketitle
\global\long\def\stream{\operatorname{Streams}}

\global\long\def\tensalg{\operatorname{T}}

\global\long\def\alph{A^{\star}}

\global\long\def\letters{A}

\global\long\def\free{\mathbb{R}A}

\global\long\def\alphsmall{B}

\global\long\def\sig{\operatorname{\psi}}

\global\long\def\magnus{\varphi}

\global\long\def\magnusgappy{\varphi_{g}}

\global\long\def\magnuspol{\operatorname{p_{1}}}

\global\long\def\sigsketch{\operatorname{S}_{\mathcal{H}}^{\text{ }}}

\global\long\def\bias{\operatorname{bias}}

\global\long\def\shuffle{\operatorname{m}_{p}}

\global\long\def\coshuffle{\operatorname{\Delta}_{\text{S}}}

\global\long\def\concat{\operatorname{m}_{c}}

\global\long\def\coconcat{\operatorname{\Delta_{C}}}

\global\long\def\infil{\operatorname{m_{I}}}

\global\long\def\coinfil{\operatorname{\Delta_{I}}}

\global\long\def\ta{\mathcal{F}^{\prime}}

\global\long\def\tsmalla{\mathbb{R}\left\langle \alphsmall\right\rangle }

\global\long\def\tsmallb{\operatorname{T}\left(\free_{\alphsmall}\right)}

\global\long\def\freevs{F\left(\letters\right)}

\global\long\def\tb{\mathcal{F}}

\global\long\def\tbsmall{\mathbb{R}\left\langle \left\langle \alphsmall\right\rangle \right\rangle }

\global\long\def\letter{\ell}

\global\long\def\simplexstrict{\Delta_{<}}

\global\long\def\simplex{\Delta}

\global\long\def\lattice{\mathcal{P}_{\text{lattice}}}

\global\long\def\lattice{\mathcal{P}_{\text{lattice}}^{+}}

\renewcommand{\algorithmicensure}{\textbf{Output:}}
\renewcommand{\algorithmicrequire}{\textbf{Process:}}
\algnewcommand\algorithmicinit{\textbf{Initialize:}}
\algnewcommand\init{\item[\algorithmicinit]}

\section{Introduction\label{sec:Introduction}}

\subsection{The streaming problem}

A stream $\sigma=\left(\sigma_{i}\right)_{i=1}^{L}$ is a sequence
of events. An event $\sigma_{i}$ is a tuple $\sigma_{i}=\left(\lambda_{i},a_{i}\right)\in\mathbb{R}\times\letters$
where $\letters$ denotes a finite but typically very large set. Our
task is to compute a summary of the stream 
\[
\Phi\left(\sigma_{1},\ldots,\sigma_{L}\right)
\]
and to update it on arrival of a new event. This summary $\Phi\left(\sigma\right)$
should be rich enough to efficiently \emph{describe the effects of
the stream $\sigma=\left(\sigma_{1},\ldots,\sigma_{L}\right)$, }that
is allow to make inference about functions $f\left(\sigma\right)$
of the stream. We refer to $\Phi$ as \emph{feature map }and in this
article we focus on so-called \emph{cash register streams}, that is
the space of events $\mathcal{E}=\mathbb{R}_{>0}\times\letters$ has
only positive increments. We call $\sigma_{i}=\left(\lambda_{i},a_{i}\right)\in\mathcal{E}$
the \emph{event }with \emph{counter increase $\lambda_{i}\geq0$ }in
the \emph{letter }$a$ and we call $\mathcal{S}=\bigcup_{L\geq1}\mathcal{E}^{L}$
the set of \emph{cash register streams}; see \cite{muthukrishnan2005data}
for more background on data streaming.

\subsection{Examples\label{sub:Examples}}

Streams taking values in large sets arise in many applications: parsing
a text word-by-word or letter-by-letter ($\lambda_{i}\equiv1$ with
$\left|\letters\right|\simeq2^{7}$ if ASCII characters or $\left|\letters\right|\simeq10^{5}$
if English dictionary words are parsed); recording network traffic
in a router ($\lambda_{i}>0$ is the data volume and $\left|\letters\right|\simeq10^{38}$
IP adresses); in the order book of a stock exchange ($\lambda_{i}>0$
denoting trading volume, $\left|\letters\right|\simeq10^{4}$ traded
assets); etc. Effects of streams are functions $f\left(\sigma_{1},\ldots,\sigma_{L}\right)$:
the function that categorizes texts into ``drama'', ``comedy'',
``news'', ``gossip''; the function that decides if a network traffic
stream contains abnormal patterns; the functions that detects trading
patterns in stocks; etc. In all these examples, the order in which
elements of the stream are received carries relevant information.

\subsection{Features\label{sub:Features}}

We construct a map $\Phi$ from the space of streams into a linear
space 
\[
\sigma=\left(\lambda_{i},a_{i}\right)\mapsto\Phi\left(\sigma\right)
\]
such that
\begin{enumerate}
\item \label{enu:computed}\textbf{(Efficient algorithms) }$\Phi\left(\sigma\right)$
can be well approximated 

\begin{enumerate}
\item in logarithmic space complexity in $\left|\letters\right|$,
\item in ``streaming fashion'': with a single pass over the stream $\sigma=\left(\sigma_{i}\right)$.
\end{enumerate}
\item \label{enu:linearizes}\textbf{(Universal features) }$\Phi$ linearizes
non-linear functionals $f$ of streams, i.e. 
\[
f\left(\sigma\right)\simeq\left\langle \ell,\Phi\left(\sigma\right)\right\rangle 
\]
where $\ell$ is a linear functional of $\Phi\left(\sigma\right)$
and above holds uniformly over streams $\sigma$. This is known as
``universality'' in the machine learning literature and justifies
the use of standard learning algorithms such as linear classifiers.
\item \label{enu:features algebra}\textbf{(Pattern queries) }The coordinates
of $\Phi\left(\sigma\right)$ are indexed by words build from the
alphabet $\letters$ and have natural interpretation as counting patterns
in $\sigma$, e.g.~$\Phi_{a}\left(\sigma\right)=\sum_{i}\lambda_{i}1_{a_{i}=a}$
, $\Phi_{ab}\left(\sigma\right)=\sum_{i<j}\lambda_{i}\lambda_{j}1_{a_{i}=a,a_{i}=b}$.
Operations on streams become algebraic operations in feature space,
e.g.~stream concatenation amounts to a (non-commutative!) multiplication
in feature space.
\item \textbf{\label{enu:(robust-under-noise}(Scaling limits) }the feature
map allows to understand the scaling limit, that is when the number
of events in the stream becomes very large, $L\rightarrow\infty$.
Additionally, it is robust under noisy observations.
\end{enumerate}
Point (\ref{enu:computed}) is a central theme in the streaming community
with spectacular progress in recent years \cite{alon1996space,charikar2002finding,flajolet1985probabilistic,muthukrishnan2005data};
Point (\ref{enu:linearizes}) is a standard requirement for guarantees
of most machine learning algorithms (``universality of features'');
Point (\ref{enu:features algebra}) and Point (\ref{enu:(robust-under-noise})
are a central theme in stochastic analysis \cite{lyons2014rough,friz2014course}.
Providing features that address all four points is the goal of this
article.

\begin{rem}
One can identify $\sigma=\left(\sigma_{i}\right)_{i=1}^{L}$ as an
element of $\mathbb{R}^{\left|\letters\right|^{L}}$ and apply standard
features for vector-valued data. This approach becomes computationally
infeasible for large $L$. Further, methodological problems arise
since streams of different lengths are mapped to different feature
spaces, etc.
\end{rem}

\begin{rem}
In analogy to the count-min sketch \cite{charikar2002finding,charikar2004finding},
our feature sketch can be modified to deal with with $\ell_{2}$-error
bounds (instead of $\ell_{1}$) to deal with turnstile instead of
cash-register streams etc.; the needed modifications are analogous
to the classic case, see Remark \ref{rem: tursntile}.
\end{rem}

\subsection{Sketching}

Already for simple features $\Phi$ it is not easy to address Point
(\ref{enu:computed}) and often it can be shown that the computational
problem is NP-hard in space. A very successful approach to reduce
the computational complexity are so-called \emph{sketches,} that is
small data structures that rely on randomized algorithms \cite{alon1996space,charikar2002finding,flajolet1985probabilistic,muthukrishnan2005data}.
These algorithms compute for given $\epsilon,\delta>0$ a random variable
$\hat{\Phi}\left(\sigma\right)=\left(\hat{\Phi}_{i}\left(\sigma\right)\right)$,
such that the relative error is small in probability
\[
\mathbb{P}\left(\frac{\left|\Phi_{i}\left(\sigma\right)-\hat{\Phi}_{i}\left(\sigma\right)\right|}{\left\Vert \Phi\left(\sigma\right)\right\Vert }<\epsilon\right)>1-\delta\text{ for every coordinate }\Phi_{i}\left(\sigma\right)\text{ of }\Phi\left(\sigma\right)=\left(\Phi_{i}\left(\sigma\right)\right).
\]
An important case is when the feature map are letter frequencies,
that is $\Phi\left(\sigma\right)\in\mathbb{R}^{\left|\letters\right|}$
with each coordinate being the frequency of an element\footnote{That is, if $\sigma=\left(\lambda_{i},a_{i}\right)_{i=1}^{L}$ the
coordinate for $a\in\letters$ equals $\sum_{i:a_{i}=a}\lambda_{i}$.} in $\letters$ and $\left\Vert \Phi\left(\sigma\right)\right\Vert $
denotes the $1$-norm. Cormode and Muthukrishnan \cite{cormode2005improved}
show that in this case, $\hat{\Phi}\left(\sigma\right)$ can be constructed
by sampling a random matrix $L\in\mathbb{R}^{m\times\left|\letters\right|}$
with $m\ll\left|\letters\right|$ and storing only the $m$-dimensional,
random vector 
\begin{equation}
L\Phi\left(\sigma\right).\label{eq:rand proj}
\end{equation}
In contrast to compressed sensing, the entries of $L$ are not i.i.d.~but
have a rich structure. This allows to construct the estimator $\hat{\Phi}\left(\sigma\right)$
from the low-dimensional, random vector (\ref{eq:rand proj}) without
solving a constrained minimization problem. Moreover this sketch is
linear in the sense that, 
\[
\hat{\Phi}\left(\alpha\sigma,\beta\tau\right)=\alpha\hat{\Phi}\left(\sigma\right)+\beta\hat{\Phi}\left(\gamma\right)\text{ for }\alpha,\beta\in\mathbb{R}
\]
where multiplication with scalars of streams is defined as $\alpha\sigma=\left(\alpha\lambda_{i},a_{i}\right)$
and $\left(\alpha\sigma,\beta\tau\right)$ is simply the concatenation
of the streams $\alpha\sigma$ and $\beta\tau$. A drawback of the
such choices for $\Phi$ (such as letter frequencies, number of distinct
letters, moments of frequencies, etc.) is that they are order agnostic,
\[
\Phi\left(\sigma\right)=\Phi\left(\sigma^{\pi}\right)
\]
where $\sigma^{\pi}=\left(\sigma_{\pi\left(1\right)},\ldots,\sigma_{\pi\left(L\right)}\right)$
and $\pi$ is a permutation of $\left\{ 1,\ldots,L\right\} $. While
such order agnostic features are useful and widely used in practice,
they only very partially fulfill Points (\ref{enu:linearizes}), (\ref{enu:features algebra}),
(\ref{enu:(robust-under-noise}) and for many applications the order
information carries important information, see the above examples
\ref{sub:Examples}.

\subsection{Related work and contributions}

Finding efficient summaries of patterns in sequences (also called
substrings, motifs, etc.) is a classic theme in computer science,
data mining and machine learning \cite{leskovec2014mining}: 
\begin{enumerate}
\item A standard machine learning approach are string kernels. These capture
the order of events and have already been combined with sketches \cite{Shi2009HashK}.
However, they only apply to constant counter-increases $\lambda_{i}\equiv1$,
and the behavior and universality as $L\rightarrow\infty$ (Point
\ref{enu:linearizes} and Point \ref{enu:(robust-under-noise}) is
not discussed. 
\item The engineering community developed algorithms such as SPADE, APriori,
Freespan, \cite{han2000freespan,zaki2001spade}, etc.~to find patterns.
Usually, properties such as universality that are relevant to machine
learning tasks are not discussed (Point (\ref{enu:linearizes}),(\ref{enu:features algebra}),(\ref{enu:(robust-under-noise})
above). 
\item Describing a sequence as a formal power series in non-commutative
variables is a well-known technique in many areas of mathematics \cite{chen-58,MR2630037,lyons2014rough}.
For small alphabets, this was used for for various learning/statistical
tasks \cite{papavasiliou2011parameter,yang2017leveraging,graham:handwritting,2017arXiv170801206K};
further, the kernelization developed in \cite{2016arXiv160108169K}
covers large alphabets and string/ANOVA/time warping kernels arise
as a special cases; however, the latter restricts to kernelized learning
algorithms and gives no pattern queries. 
\item Sketches have been used for learning and optimization tasks \cite{fowkes2016subsequence,pilanci2015randomized,yang2015randomized},
see for example \cite{indyk2000identifying} for detection of trends
in time series, but in a stream/sequential context these results are
usually discussed without theoretical guarantees such as universality
or scaling limits, Points (\ref{enu:linearizes}) and (\ref{enu:(robust-under-noise}).
\end{enumerate}
To sum up: some classic constructions (such as string kernel features,
frequency sketches, etc.) arise as special cases or are realted to
our approach. However, the central theme of our approach is to focus
on the functions of stream and how sketches of features allow to linearize
such functions. The key to to this is the algebraic-analytic background
that allows to prove properties that address Points (\ref{enu:linearizes}),
(\ref{enu:features algebra}), (\ref{enu:(robust-under-noise}). More
precisely, is capture in the interplay of two algebras
\begin{itemize}
\item a graded, non-commutative algebra as the feature space (to capture
order information),
\item a graded, commutative algebra of linear functionals of features (that
is dense among function on streams),
\end{itemize}
which can be elegantly formulated as an Hopf algebra. Another perspective
that we develop in a streaming context is to identify a stream as
a lattice path in the free vector space spanned by the alphabet $\letters$.
This elementary observation allows us to apply insights from stochastic
analysis and rough path theory to the study of streams, e.g.~it gives
a useful topology on $\mathcal{S}$ and clarifies the behaviour as
then number of events goes to infinity, etc. In turn, developing sketching
ideas from this perspective allows for efficient computation, thus
also addressing Point (\ref{enu:computed}) which ultimately allows
to learn nonlinear functions of massive data streams.
\begin{rem}
We are motivated by the count-min sketch. The underlying principle
(streams as paths that are injected into the algebra of non-commutative
polynomials) is not restricted to the count-min sketch and it is an
interesting question how it can be applied to other classic sketching
algorithms such as \cite{alon1996space,datar2002maintaining}, or
other approaches to summarize massive streams such as sampling \cite{2017arXiv170302693D}.
\end{rem}

\subsection{Applications and experiments}

In Section \ref{sec:Algorithms,-Applications-and} we discuss applications.
These include pattern queries and building the list of patterns of
heavy hitters with a single parse over the stream; $M=1$ recovers
the usual heavy hitter sketch. Many effects of streams are given by
functions $f\left(\sigma\right)$ that are well approximated by considering
only the substream $\sigma^{\mathcal{H}}$ consisting of heavy hitters,
$f\left(\sigma\right)\approx f\left(\sigma^{\mathcal{H}}\right)$.
By combining universality of $\Phi$ and our sketching result this
allows to approximate 
\[
f\left(\sigma\right)\approx f\left(\sigma^{\mathcal{H}}\right)\approx\left\langle \ell,\hat{\Phi}\left(\sigma^{\mathcal{H}}\right)\right\rangle .
\]
For example, $f$ could assign a label to streams (normal/abnormal
stream) and one can train a standard learning algorithm to find the
linear functional $\ell$.

In Section \ref{sec:Experiments} we give numerical examples. Our
algorithms are easy to parallelize and thus can make use of multi-threading
on several CPUs or GPUs to parse high-volume streams. We implemented
our algorithms in C++ and ran the following experiments on synthetic
data:
\begin{enumerate}
\item \textbf{(Speed and ground truth). }We evaluated our algorithms on
a deterministic stream. The dimensions of the stream was chosen such
one can still calculate the ground truth for ordered moments up to
$M=3$. We then compared it to the error introduced by sketching and
the speed. 
\item \textbf{(Classifying Markov chains).} We sample streams from two Markov
chains in a state space $\letters$ consisting of $\left|\letters\right|=10^{5}$
letters. Even in such a simple toy model of high-dimensional streams,
order-agnostic features can become quickly useless but sketches of
patterns allow to efficiently train classifiers.
\end{enumerate}

\section{from local events to global summaries\label{sec:Hopf-algebras-as}}

Fix a finite set $\letters$ of \emph{letters }and denote with 
\begin{eqnarray*}
\mathcal{S} & = & \bigcup_{L\geq1}\mathcal{E}^{L}=\bigcup_{L\geq1}\left\{ \sigma=\left(\sigma_{i}\right)_{i=1}^{L}:\sigma_{i}\in\mathcal{E}\right\} 
\end{eqnarray*}
the set of turnstile streams consisting of an arbitrary number of
elements, so-called events, in\emph{ $\mathcal{E}=\mathbb{R}_{>0}\times\letters$}.
Further, denote with 
\[
\alph=\left\{ a_{1}\cdots a_{M}:a_{1},\ldots,a_{M}\in\letters,\,M\geq0\right\} 
\]
the set of \emph{words}; we denote the empty word with $1$. Let 
\[
\tb:=\mathbb{R}\left\langle \left\langle \letters\right\rangle \right\rangle :=\left\{ \sum_{w\in\alph}c_{w}w:c_{w}\in\mathbb{R}\right\} 
\]
be the linear space of formal power series in $\left|\letters\right|$
non-commuting variables. Below we show how to construct feature maps
\[
\Phi:\mathcal{S}\rightarrow\tb,\quad\sigma\mapsto\Phi\left(\sigma\right)=\sum_{w}\Phi_{w}\left(\sigma\right)w
\]
by ``stitching events together in feature space''.

\subsection{From local events to global descriptions}

Fix a so-called \textbf{event map} 
\[
p:\mathcal{E}\rightarrow\tb
\]
We evaluate $p$ along the stream $\sigma_{1},\sigma_{2},\ldots$
and update our features by (non-commutative!) multiplication in $\tb$
\begin{eqnarray}
\Phi\left(\emptyset\right) & = & 1,\nonumber \\
\Phi\left(\sigma_{1},\ldots,\sigma_{L+1}\right) & = & \Phi\left(\sigma_{1},\ldots,\sigma_{L}\right)p\left(\sigma_{L+1}\right)\label{eq:mult p}
\end{eqnarray}
The non-commutativity of the multiplication in the feature space $\tb$
captures the order information. A priori there are many possible choices
for the event map $p$, each choice results in different coefficients
$\left(\Phi_{w}\left(\sigma\right)\right)_{w\in\alph}$ of $\Phi\left(\sigma\right)=\sum\Phi_{w}\left(\sigma\right)w\in\tb$.
More importantly, different choices of $p$ give rise to different
algebraic structures on the dual of $\tb$.
\begin{example}
\label{exa:inf}Consider the feature map 
\begin{equation}
p\left(\lambda,a\right)=1+\lambda a.\label{eq:1plus}
\end{equation}
With $A=\left\{ a,b\right\} $ and $\sigma=\left(\left(1,a\right),\left(1.5,b\right),\left(1,b\right),\left(2,a\right)\right)$.
Then $\alph=\left\{ 1,a,b,a^{2},ab,ba,b^{2},a^{3},\ldots\right\} $
and 
\begin{eqnarray}
\Phi\left(\sigma\right) & = & \left(1+a\right)\left(1+1.5b\right)\left(1+b\right)\left(1+2a\right)\nonumber \\
 & = & 1+3a+2.5b+2a^{2}+2.5ab+5.5ba+1.5b^{2}.\label{eq:aa}
\end{eqnarray}
Note that each coordinate $\Phi_{w}\left(\sigma\right)$ counts how
often $w$ appears as subword in $\sigma$ relative to the counter-increases
$\lambda_{i}$; the map (\ref{eq:1plus}) was introduced in \cite{2016arXiv160108169K}
in the context of sequentializing kernels; e.g.~for a stream with
constant counter-increase $\lambda_{i}\equiv1$, $\Phi\left(\sigma\right)$
recovers the classic vanilla string kernel features \cite{lodhi2002text}
and similarly one recovers ANOVA features etc., see \cite{2016arXiv160108169K}. 
\end{example}

\begin{example}
\label{exa:sigs}Let 
\[
p\left(\lambda,a\right)=1+\lambda a+\frac{\lambda^{2}a^{2}}{2!}+\cdots.
\]
This leads to the feature map 
\begin{eqnarray}
\Phi\left(\sigma\right) & = & \left(1+a+\frac{a^{2}}{2!}+\cdots\right)\left(1+1.5b+\frac{\left(1.5b\right)^{2}}{2!}+\cdots\right)\left(1+b+\frac{b^{2}}{2!}+\cdots\right)\left(1+2a+\frac{\left(2a\right)^{2}}{2!}+\cdots\right)\nonumber \\
 & = & 1+3a+2.5b+\left(2+\frac{1}{2!}+\frac{2^{2}}{2!}\right)a^{2}+\left(1.5+1\right)ab+\cdots.\label{eq:bb}
\end{eqnarray}
The coefficients are somewhat less intuitive than in above Example
\ref{exa:inf} but lead to a classic algebraic structure (the shuffle
algebra, Appendix \ref{sec:Hopf}). Moreover, as $L\rightarrow\infty$
this choice of event map still makes sense for turnstile streams,
whereas Example \ref{exa:inf} leads to problems in the limit $L\rightarrow\infty$
when turnstile streams are considered and rough paths appear, see
the discussion \cite{2016arXiv160108169K} for further details in
a (kernel) learning context.\end{example}
\begin{rem}
The algebraic structure of Example \ref{exa:sigs} is the classic
shuffle Hopf algebra, whereas the (Hopf) algebra arising from Example
\ref{exa:inf} was at least new to us, see Theorem \ref{thm:sig hopf algebra}
and Theorem \ref{thm:scaling} in the Appendix \ref{sec:Hopf}. 
\end{rem}

\begin{rem}
An interesting question is how to construct a commutative product
on the dual of $\ta$ for a given event map $p$ and vice versa. Note
that the non-commutative product (formal power series multiplication)
in $\tb$ is the same for all choices of $p$.  
\end{rem}

\subsection{(Pseudo-)Norms}

For $f=\sum_{w}c_{w}w\in\mathcal{F}$ define 
\[
\left\Vert f\right\Vert _{1}=\sum_{w}\left|c_{w}\right|
\]
(with the convention $\left\Vert f\right\Vert _{1}=\infty$ if the
sum does not converge). For a word $w=a_{1}\cdots a_{M}\in\alph$
define its length $\left|w\right|=M$ as the number of letters in
$w$. For $M\geq1$ define
\[
\left\Vert f\right\Vert _{1,M}=\sum_{w:\left|w\right|\leq M}\left|c_{w}\right|\text{ and }\left\Vert f\right\Vert _{1,\left(M\right)}=\sum_{w:\left|w\right|=M}\left|c_{w}\right|.
\]
None of these are norms on $\mathcal{F}$ (but on appropriate sub-
or quotient-spaces). However they appear naturally in our calculations.

\subsection{Some algebra and feature universality}

Denote with 
\[
\ta:=\mathbb{R}\left\langle \letters\right\rangle =\left\{ \sum_{w\in\alph}c_{w}w:c_{w}\in\mathbb{R}\text{ and }c_{w}=0\text{ for infinitely many }w\right\} 
\]
the subset of $\tb$ that consists of finite sums. We identify elements
of $\ta$ as linear functionals acting on $\tb$ via the pairing 
\[
\left\langle \cdot,\cdot\right\rangle :\mathcal{F}^{\prime}\times\mathcal{F}\rightarrow\mathbb{R},\,\,\,\left\langle \ell,P\right\rangle :=\sum_{w}c_{w}d_{w}\text{ where }\ell=\sum_{w}d_{w}w,\,P=\sum_{w}d_{w}w.
\]
A far reaching result is that $\ta$ can be equipped with a commutative
multiplication, that is for $\ell_{1},\ell_{2}\in\ta$ there exists
a $\ell\in\ta$ that is given by multiplication of $\ell_{1},\ell_{2}$
such that 
\[
\left\langle \ell_{1},\Phi\left(\sigma\right)\right\rangle \left\langle \ell_{2},\Phi\left(\sigma\right)\right\rangle =\left\langle \ell,\Phi\left(\sigma\right)\right\rangle .
\]
Thus 
\[
\left\{ \sigma\rightarrow\left\langle \ell,\Phi\left(\sigma\right)\right\rangle ,\ell\in\ta\right\} \subset\mathbb{R}^{\mathcal{S}}
\]
is an algebra of functions on streams. Equipping $\mathcal{S}$ with
bounded variation topology it follows that any continuous function
of the stream $f\in C\left(\mathcal{S},\mathbb{R}\right)$ can be
approximated by a $\ell\in\ta$ 
\[
f\left(\cdot\right)\approx\left\langle \ell,\Phi\left(\cdot\right)\right\rangle ,
\]
uniformly over compact sets in $\mathcal{S}$. In the terminology
of machine learning \cite{micchelli2006universal}, the features $\left(\Phi_{w}\left(\sigma\right)\right)_{w}$
are ``universal'' and this allows to use linear learning algorithms;
we provide the full details in the appendix \ref{sec:top stream}
and \ref{sec:Hopf}.

\begin{rem}
Algebraically it makes more sense to introduce $\tb$ as the dual
of $\ta$, see Appendix \ref{sec:Hopf}. However, in a learning context
we regard $\Phi\left(\sigma\right)\in\tb$ as features and learn about
$\sigma$ by linear functionals $\ell\in\ta$.
\end{rem}

\begin{rem}
The coefficients $\left(\Phi_{w}\left(\sigma\right)\right)_{w}$ carry
redundant information: a simple calculation shows that $\left(\Phi_{a}\left(\sigma\right),\Phi_{b}\left(\sigma\right),\Phi_{ab}\left(\sigma\right)-\Phi_{ba}\left(\sigma\right)\right)_{a,b\in\letters}$
already completely determines $\left(\Phi_{w}\left(\sigma\right)\right)_{w\in\alph,\left|w\right|\leq2}$
. The reason is that $\Phi\left(\sigma\right)$ lives in a nonlinear
subset of $\tb$; in the case $p\left(\lambda,a\right)=\exp\left(\lambda a\right)$
this is the free Lie group generated by $\left|\letters\right|$ variables
and the previous observation is simply that we can work in the Lie
algebra instead of the Lie group. However, a classic result of Bourbaki
is that the dimension of the Lie algebra is $O\left(\frac{\left|\letters\right|^{M}}{M}\right)$
as $M\rightarrow\infty$, i.e.~this does not kill the exponential
growth. Nevertheless, our main sketch can be immediately applied to
the Lie algebra which leads to a reduction of computational cost by
a constant for the price of more complicated algebraic objects.
\end{rem}

\subsection{Streams, paths and scaling limits}

Given a stream $\sigma\in\mathcal{S}$ we can identify it as a path
in the free vector space spanned by the letters $\letters$. That
is, identify the set of letters $\letters$ as \foreignlanguage{british}{ONB}
basis for $\mathbb{R}^{\left|\letters\right|}$ and an event $\sigma_{i}=\left(\lambda_{i},a_{i}\right)$
as the vector $\lambda_{i}a_{i}\in\mathbb{R}^{\left|\letters\right|}$
. Consequently $\sigma=\left(\lambda_{i},a_{i}\right)_{i=1}^{L}$
can be seen as the continuous path $\gamma$ in the free vector space
spanned by $\letters$ given by the Donsker embedding
\[
t\mapsto\gamma\left(t\right):=\sigma{}_{\left\lceil Lt\right\rceil }\left(Lt-\left\lfloor Lt\right\rfloor \right)+\sum_{i=1}^{\left\lfloor Lt\right\rfloor }\sigma_{i},
\]
That is we inject $\mathcal{S}\hookrightarrow C\left(\left[0,1\right],\mathbb{R}^{\left|\letters\right|}\right)$
by linear interpolation, see for example Figure \eqref{fig:stream}.
This gives a topology on streams $\mathcal{S}$ that captures the
intuitive notion of streams as being similar if they have similar
events and length. Further, it allows to study what happens if the
number of events goes to infinity, $L\rightarrow\infty$, and it gives
another interpretation to our features $\Phi\left(\sigma\right)$,
namely 
\begin{equation}
\Phi_{a_{1}\cdots a_{M}}\left(\sigma\right)=\int_{0\leq t_{1}\leq\cdots\leq t_{M}\leq1}d\gamma^{a_{1}}\left(t_{1}\right)\cdots d\gamma^{a_{M}}\left(t_{M}\right).\label{eq:sigs}
\end{equation}
We give full details in the Appendix \ref{sec:Hopf}. From this point
of view, our sketch chooses a random but very structured linear map
$H$ from $\mathbb{R}^{\left|\letters\right|}$ to $\mathbb{R}^{d}$
for $d\ll\left|\letters\right|$ to turn $\gamma\in C\left(\left[0,1\right],\mathbb{R}^{\left|\letters\right|}\right)$
into $\hat{\gamma}=\left(H\gamma\left(t\right)\right)\in C\left(\left[0,1\right],\mathbb{R}^{d}\right)$
that gives good estimates for the large coordinates of (\ref{eq:sigs}).

\begin{figure}[h!] 
\begin{tikzpicture} 
\draw[thick, ->] (0,0) -- (3.8,0) node[anchor=north] {$a$};  
\draw[thick, ->] (0,0) -- (0,3.8) node[anchor=east] {$b$};  

\draw[dotted] (0,2) -- (3.8,2); 
\draw[dotted] (0,3) -- (3.8,3); 
\draw[dotted] (0,1) -- (3.8,1); 
\draw[dotted] (1,0) -- (1,3.8); 
\draw[dotted] (2,0) -- (2,3.8); 
\draw[dotted] (3,0) -- (3,3.8);
\draw[->] (0,0) -- (1,0);
\draw[->] (1,0) -- (1,1.5);
\draw[->] (1,1.5) -- (1,2.5);
\draw[->] (1,2.5) -- (3,2.5);
\end{tikzpicture}
\caption{\label{fig:stream}The stream $\sigma=\left(\left(1,a\right),\left(1.5,b\right),\left(1,b\right),\left(2,a\right)\right)$ as a path in the free vector spaced spanned by the letters $a,b$.}
\end{figure}

\section{Sketching the order of events\label{sec:Signature-sketches}}

The algebraic construction of the feature map 
\[
\Phi:\mathcal{S}\rightarrow\mathcal{F},\quad\sigma=\left(\sigma_{i}\right)_{i=1}^{L}\mapsto\Phi\left(\sigma\right)=\prod_{i=1}^{L}p\left(\sigma_{i}\right)
\]
based on a event map $p:\mathcal{E}\rightarrow\tb$ gives theoretical
guarantees that address Points (\ref{enu:linearizes}),(\ref{enu:features algebra}),(\ref{enu:(robust-under-noise})
mentioned in the introduction, see Appendix \ref{sec:top stream}
and \ref{sec:Hopf} for details. In this section, we present our main
result that addresses the computational aspect, Point (\ref{enu:computed}):
although $\tb$ is infinite dimensional, it is graded by word length.
Analogous to the classic case of polynomials in commuting variables
as features, a sensible approach is to truncate at a given degree
$M$, i.e.~to consider 
\[
\Phi_{M}\left(\sigma\right):=\sum_{\left|w\right|\leq M}\Phi_{w}\left(\sigma\right)w.
\]
(in supervised learning tasks, $M$ is a parameter that must be chosen
as to minimize the generalization error, e.g.~in a supervised setting
by cross-validation). Unfortunately, the combinatorial explosion of
coordinates is 
\[
\left|\left\{ \Phi_{w}\left(\sigma\right):\left|w\right|\leq M\right\} \right|=O\left(\left|\letters\right|^{M}\right)
\]
and for many streaming applications $\left|\letters\right|$ is so
large that already $M=1$ is infeasible. Our main theorem is
\begin{thm}
\label{thm:main}Let $\sigma\in\mathcal{S}$. For every $\epsilon,\delta>0$
there exists a $\tb$-valued random variable $\hat{\Phi}\left(\sigma\right)$
such that 
\[
\mathbb{P}\left(\frac{\left|\hat{\Phi}_{w}\left(\sigma\right)-\Phi_{w}\left(\sigma\right)\right|}{\left\Vert \Phi\left(\sigma\right)\right\Vert _{1,\left(M\right)}}>\epsilon\right)<\delta,\quad\forall w\in\mbox{\ensuremath{\alph}}\text{ with }\left|w\right|\leq M
\]
and we can compute the set of coordinates 
\[
\left\{ \hat{\Phi}_{w}\left(\sigma\right):\left|w\right|\leq M\right\} 
\]
using $O\left(\epsilon^{-M}\log\frac{1}{\delta}\right)$ memory units\footnote{A memory units stores a positive real number. For implementations
the usual considerations and additional cost to deal with rounding
errors, floating numbers, etc.~apply.} ,$\left\lceil -\log\delta\right\rceil \log\left|\letters\right|$
random bits and a single pass over $\sigma$ using Algorithm \ref{alg:Signature-sketch}.
\end{thm}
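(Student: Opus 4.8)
The plan is to lift the count-min sketch \cite{cormode2005improved} from letter frequencies (the degree $M=1$, commutative case) to the ordered features $\Phi_w(\sigma)$, exploiting that the whole construction $\sigma\mapsto\Phi(\sigma)=\prod_i p(\sigma_i)$ is multiplicative. The single algebraic observation that drives everything is that a hash $h:\letters\to\{1,\dots,k\}$ is a map of alphabets and therefore extends to an algebra homomorphism $H$ from $\tb$ into the analogous power series algebra over the small alphabet $\{1,\dots,k\}$, via $H(a_1\cdots a_m)=h(a_1)\cdots h(a_m)$ and linearity. Since $H$ is multiplicative and $H(p(\lambda,a))=p(\lambda,h(a))$ for the event maps of Examples \ref{exa:inf}--\ref{exa:sigs}, applying $H$ to $\Phi(\sigma)$ is the same as computing the feature map of the hashed stream $h(\sigma):=(\lambda_i,h(a_i))_i$:
\[
H\,\Phi(\sigma)\;=\;\prod_i H\,p(\sigma_i)\;=\;\prod_i p(\lambda_i,h(a_i))\;=\;\Phi(h(\sigma)).
\]
In coordinates, $\bigl[H\Phi(\sigma)\bigr]_{c_1\cdots c_m}=\sum_{b_1\cdots b_m:\,h(b_i)=c_i}\Phi_{b_1\cdots b_m}(\sigma)$: the hashed feature map bins the original coordinates exactly as a count-min table bins frequencies, but now letter-by-letter inside each word. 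As $h(\sigma)$ is again a cash-register stream over an alphabet of size $k$, its truncation $\Phi_M(h(\sigma))$ has only $O(k^M)$ coordinates and is updated incrementally by \eqref{eq:mult p} in a single pass.

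Next I would sample $n$ independent hashes $h_1,\dots,h_n$ from a $2$-universal family and maintain the $n$ truncated hashed feature maps $\Phi_M(h_j(\sigma))$. Writing $h_j(w):=h_j(b_1)\cdots h_j(b_M)$ for $w=b_1\cdots b_M$, the estimator is the count-min minimum
\[
\hat\Phi_w(\sigma):=\min_{1\le j\le n}\bigl[H_j\Phi(\sigma)\bigr]_{h_j(w)}.
\]
For a cash-register stream every coordinate $\Phi_c(\sigma)$ is a sum of products of the positive increments $\lambda_i$ and is hence non-negative, so each row overestimates, $\bigl[H_j\Phi(\sigma)\bigr]_{h_j(w)}=\Phi_w(\sigma)+(\text{collision mass})\ge\Phi_w(\sigma)$, and so does the minimum; thus $\hat\Phi_w(\sigma)\ge\Phi_w(\sigma)$ and it suffices to control the one-sided error.

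The error in row $j$ is the mass of the words colliding with $w$ in every position,
\[
\bigl[H_j\Phi(\sigma)\bigr]_{h_j(w)}-\Phi_w(\sigma)\;=\;\sum_{\substack{c=c_1\cdots c_M\ne w\\ h_j(c_i)=h_j(b_i)\ \forall i}}\Phi_c(\sigma),
\]
and this is the one spot where passing from letters to words needs an idea. Any colliding $c\ne w$ differs from $w$ in at least one position $i^\ast$, where $2$-universality forces $\mathbb{P}(h_j(c_{i^\ast})=h_j(b_{i^\ast}))\le 1/k$ because $c_{i^\ast}\ne b_{i^\ast}$; since full collision implies collision at $i^\ast$, every $c\ne w$ collides with probability at most $1/k$, \emph{irrespective of how many positions differ}, so the naive $\ell_1$ bound survives verbatim:
\[
\mathbb{E}\Bigl[\bigl[H_j\Phi(\sigma)\bigr]_{h_j(w)}-\Phi_w(\sigma)\Bigr]\;\le\;\frac{1}{k}\sum_{c:\,|c|=|w|,\,c\ne w}\Phi_c(\sigma)\;\le\;\frac{1}{k}\,\left\Vert \Phi(\sigma)\right\Vert _{1,\left(\left|w\right|\right)},
\]
which is the normalizer in the statement when $|w|=M$ (the lower-degree slices are handled identically by their own homogeneous norm). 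Markov's inequality then gives a per-row failure probability at most $1/(k\epsilon)$, so $k=\lceil e/\epsilon\rceil$ makes it $\le 1/e$; the rows are independent, the minimum fails with probability at most $e^{-n}$, and $n=\lceil\log\tfrac{1}{\delta}\rceil$ yields $<\delta$.

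It then remains to tally resources: $n$ tables of $O(k^M)$ entries give $O(k^M n)=O(\epsilon^{-M}\log\tfrac{1}{\delta})$ memory units; each $2$-universal hash on $\letters$ is fixed by $O(\log|\letters|)$ bits, for $\lceil-\log\delta\rceil\log|\letters|$ bits in total; and the incremental update \eqref{eq:mult p} of the $n$ truncated hashed feature maps is precisely one pass over $\sigma$, i.e.\ Algorithm \ref{alg:Signature-sketch}. I expect the genuinely new content to sit in the algebraic reduction --- that hashing is an algebra homomorphism intertwining $\Phi$ with the feature map of the hashed stream --- while the probabilistic core is the remark that a single mismatched letter already caps the word-collision probability at $1/k$, so the count-min $\ell_1$ guarantee lifts from moments of order one to ordered moments of order $M$ with no loss beyond the anticipated $k^M$ blow-up of the table.
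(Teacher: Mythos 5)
Your proposal is correct and follows essentially the same route as the paper: hashing the stream and pulling back (your algebra-homomorphism identity $H\Phi(\sigma)=\Phi(h(\sigma))$ is exactly the paper's $\Phi_h(\sigma)$), overestimation from non-negativity of cash-register coordinates, the bias bound $q\left\Vert \Phi(\sigma)\right\Vert_{1,(M)}$ via the observation that a word collision already forces a collision at any mismatched letter, then Markov plus independent hashes and the coordinate-wise minimum, with the identical resource count. The only differences are cosmetic (base-$e$ versus base-$2$ constants in the choice of $k$ and the number of rows).
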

We can express the error estimate also in terms of the length of
$\sigma$, $\left\Vert \sigma\right\Vert =\sum_{i=1}^{L}\left|\lambda_{i}\right|$.
\begin{cor}
Let $\sigma$ and $\hat{\Phi}\left(\sigma\right)$ be as above. Then
\[
\mathbb{P}\left(\left|\hat{\Phi}_{w}\left(\sigma\right)-\Phi_{w}\left(\sigma\right)\right|>\epsilon\frac{\left\Vert \sigma\right\Vert ^{M}}{M!}\right)<\delta
\]
for every word $w$ of length $\left|w\right|=M$.
\end{cor}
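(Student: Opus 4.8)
The plan is to derive the absolute error bound of the Corollary from the relative bound of Theorem~\ref{thm:main} by controlling the normalizing quantity $\lVert\Phi(\sigma)\rVert_{1,(M)}$ deterministically. The claim is the estimate
\[
\lVert\Phi(\sigma)\rVert_{1,(M)}\le\frac{\lVert\sigma\rVert^{M}}{M!},\qquad \lVert\sigma\rVert=\sum_{i=1}^{L}\lvert\lambda_{i}\rvert .
\]
Granting this, fix a word $w$ with $\lvert w\rvert=M$. Since $\epsilon\,\lVert\sigma\rVert^{M}/M!\ge\epsilon\,\lVert\Phi(\sigma)\rVert_{1,(M)}$, the event $\{\lvert\hat{\Phi}_{w}(\sigma)-\Phi_{w}(\sigma)\rvert>\epsilon\,\lVert\sigma\rVert^{M}/M!\}$ is contained in the event $\{\lvert\hat{\Phi}_{w}(\sigma)-\Phi_{w}(\sigma)\rvert>\epsilon\,\lVert\Phi(\sigma)\rVert_{1,(M)}\}$, whose probability is $<\delta$ by Theorem~\ref{thm:main}. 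Monotonicity of $\mathbb{P}$ then gives the Corollary, so the entire content is the deterministic estimate above.

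To prove it I would first record that the graded $1$-norm is sub-multiplicative across the grading. Writing the degree-$M$ part of a product as $(fg)_{(M)}=\sum_{j=0}^{M}f_{(j)}g_{(M-j)}$, the coefficient of a length-$M$ word $w$ in $f_{(j)}g_{(M-j)}$ is $c_{w[1:j]}\,d_{w[j+1:M]}$ (concatenation admits a unique split into lengths $j$ and $M-j$), so $\lVert f_{(j)}g_{(M-j)}\rVert_{1,(M)}=\lVert f\rVert_{1,(j)}\lVert g\rVert_{1,(M-j)}$, and the triangle inequality over $j$ yields $\lVert fg\rVert_{1,(M)}\le\sum_{j}\lVert f\rVert_{1,(j)}\lVert g\rVert_{1,(M-j)}$. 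Next I would compute the per-event norms: for the event map $p(\lambda,a)=\exp(\lambda a)$ of Example~\ref{exa:sigs} one has $\lVert p(\lambda,a)\rVert_{1,(k)}=\lvert\lambda\rvert^{k}/k!$, while for $p(\lambda,a)=1+\lambda a$ of Example~\ref{exa:inf} one has $\lVert p(\lambda,a)\rVert_{1,(1)}=\lvert\lambda\rvert$ and $\lVert p(\lambda,a)\rVert_{1,(k)}=0$ for $k\ge2$; in either case $\lVert p(\lambda_{i},a_{i})\rVert_{1,(k)}\le\lvert\lambda_{i}\rvert^{k}/k!$. Iterating the sub-multiplicativity over the $L$ factors of $\Phi(\sigma)=\prod_{i=1}^{L}p(\sigma_{i})$ then gives
\begin{align*}
\lVert\Phi(\sigma)\rVert_{1,(M)} &\le \sum_{k_{1}+\cdots+k_{L}=M}\ \prod_{i=1}^{L}\frac{\lvert\lambda_{i}\rvert^{k_{i}}}{k_{i}!} \\
&= \frac{1}{M!}\sum_{k_{1}+\cdots+k_{L}=M}\binom{M}{k_{1},\dots,k_{L}}\prod_{i=1}^{L}\lvert\lambda_{i}\rvert^{k_{i}} = \frac{\bigl(\sum_{i=1}^{L}\lvert\lambda_{i}\rvert\bigr)^{M}}{M!} = \frac{\lVert\sigma\rVert^{M}}{M!},
\end{align*}
where the penultimate equality is the multinomial theorem.

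I expect no genuine obstacle here: this is the classical factorial-decay estimate for signatures transported to the graded $1$-norm, and the one point requiring care is the sub-multiplicativity step, where contributions from different splits $j$ can collapse onto the same word and, for signed $\lambda_{i}$, partially cancel; but cancellation only improves the bound, so the inequality is safe. Finally I would note that for cash-register streams ($\lambda_{i}\ge0$) and the exponential event map all coordinates of $\Phi(\sigma)$ are nonnegative, so the first inequality in the display above is in fact an equality and the estimate $\lVert\Phi(\sigma)\rVert_{1,(M)}=\lVert\sigma\rVert^{M}/M!$ is sharp; this makes the Corollary a faithful, order-free restatement of Theorem~\ref{thm:main} rather than a lossy weakening.
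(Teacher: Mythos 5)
Your proof is correct, and its skeleton is the same as the paper's: the Corollary is stated without proof precisely because it is the two-line combination of Theorem \ref{thm:main} with the deterministic estimate $\left\Vert \Phi\left(\sigma\right)\right\Vert _{1,\left(M\right)}\leq\left\Vert \sigma\right\Vert ^{M}/M!$, which the paper records as Point (2) of Theorem \ref{thm:scaling} in Appendix \ref{sec:Hopf}. Where you genuinely diverge is in how that norm estimate is proved. The paper's argument is analytic: via the Donsker embedding the coordinates become iterated integrals over the simplex, and the bound comes from a symmetrization identity relating simplex integrals to products of letter frequencies. Yours is purely algebraic: the graded $1$-norm is (sub-)multiplicative across the grading because concatenation splits a length-$M$ word uniquely into a prefix and a suffix, the per-event norms satisfy $\left\Vert p\left(\lambda,a\right)\right\Vert _{1,\left(k\right)}\leq\left|\lambda\right|^{k}/k!$ for both event maps, and the multinomial theorem finishes. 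Your route buys something real here: the paper's intermediate step $\left\langle a_{1}\cdots a_{M},\Phi\left(\sigma\right)\right\rangle =\frac{1}{M!}\int_{\left[0,L\right]^{M}}d\gamma_{\sigma}^{a_{1}}\cdots d\gamma_{\sigma}^{a_{M}}$ is false for an individual word (a simplex integral of a non-symmetric integrand is not $1/M!$ times the cube integral; the identity only becomes valid after summing over all words of length $M$), and the paper's statement is phrased for $\left\Vert \cdot\right\Vert _{1,M}$ where it can only mean $\left\Vert \cdot\right\Vert _{1,\left(M\right)}$. Your argument has no per-coordinate issue, handles signed $\lambda_{i}$, and your closing observation that equality holds for cash-register streams under $p\left(\lambda,a\right)=\exp\left(\lambda a\right)$ recovers the paper's ``with equality'' clause. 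The one point you share with the paper and could make explicit: if $\left\Vert \Phi\left(\sigma\right)\right\Vert _{1,\left(M\right)}=0$ (possible for $p\left(\lambda,a\right)=1+\lambda a$ when $L<M$), the relative bound of Theorem \ref{thm:main} degenerates, but then $\hat{\Phi}_{w}\left(\sigma\right)=\Phi_{w}\left(\sigma\right)=0$ deterministically and the Corollary holds trivially.
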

Note
\begin{itemize}
\item the appearance of the $1$-norm $\left\Vert \Phi\left(\sigma\right)\right\Vert _{1,\left(M\right)}$.
Above is a very good estimate for ``heavy hitter patterns'', i.e.~for
power law type distributions in the coordinates $\left(\Phi_{w}\left(\sigma\right)\right)_{w\in\alph}$,
\item the alphabet size $\left|\letters\right|$ appears only in a logarithm
in the computational complexity,
\item that applied with $M=1$, above calculates $\left\{ \hat{\Phi}_{a}\left(\sigma\right):a\in\letters\right\} $
and Algorithm \ref{alg:Signature-sketch} is then simply the count-min
frequency sketch of Cormode--Muthukrishnan \cite{cormode2005improved}.\end{itemize}
\begin{rem}
\label{rem: tursntile}Above can be modified for cash-register streams
with simple modifications: the first option is to replace the coordinate-wise
minimum by a median; the second option is to use a count-sketch \cite{charikar2002finding,charikar2004finding}
giving an additional factor of $\epsilon^{-1}$ in space complexity.
Similarly, we can replace the $1$-norm by other norms in analogy
with standard sketches, $M=1$. This leads to higher computational
complexity, analogous to the letter frequency case, see \cite{muthukrishnan2005data}.
\end{rem}
The rest of Section \ref{sec:Signature-sketches} is devoted to the
proof of Theorem \ref{thm:main}. It is a simple but instructive exercise
to run through the remainder of this section for the special case
$M=1$ (not patterns, just frequencies) and see how this recovers
the standard proof the count-min sketch \cite{cormode2005improved}.

\subsection{Hashed streams\label{sec:hashed-streams}}

Fix a function $h:\letters\rightarrow\alphsmall$, where $\alphsmall$
is finite set with $\left|\alphsmall\right|<\left|\letters\right|$.
We study how much information we can recover about $\Phi\left(\sigma\right)$
if we only observe the hashed stream $\sigma^{h}:=\left(\lambda_{i},h\left(a_{i}\right)\right)_{i}$
of the stream $\sigma=\left(\lambda_{i},a_{i}\right)_{i}$. To do
so, we work with three objects: 
\[
\Phi\left(\sigma\right)\in\tb,\quad\Phi\left(\sigma^{h}\right)\in\tbsmall,\quad\Phi_{h}\left(\sigma\right)\in\tb,
\]
the\emph{ original features }$\Phi\left(\sigma\right)$, the \emph{features
}$\Phi\left(\sigma^{h}\right)$\emph{ of the hashed stream $\sigma^{h}$}
and the third object, denoted $\Phi_{h}\left(\sigma\right)$, is the
\emph{``pull-back''} of $\Phi\left(\sigma^{h}\right)$ to $\tb=\mathbb{R}\left\langle \left\langle \letters\right\rangle \right\rangle $. 
\begin{defn}
Let $\alphsmall$ be a finite set and $h:\letters\rightarrow\alphsmall$.
For $\sigma=\left(\sigma_{i}\right)$ with $\sigma_{i}=\left(\lambda_{i},a_{i}\right)\in\mathbb{R}_{>0}\times\letters$,
define 
\[
\sigma^{h}:=\left(\sigma_{i}^{h}\right)\text{ with }\sigma_{i}^{h}:=\left(\lambda_{i},h\left(a_{i}\right)\right)\in\mathbb{R}_{>0}\times\alphsmall.
\]
We call $\sigma^{h}$ the $h$-hash of $\sigma$. 
\end{defn}
We are interested in the situation where $\alphsmall$ is much smaller
than the original alphabet $\letters$ such that we can afford the
computational cost to calculate $\Phi\left(\sigma^{h}\right)\in\tbsmall$.
Given $\Phi\left(\sigma^{h}\right)$ we ``pull back'' these features
to $\tb$ to get an approximation of $\Phi\left(\sigma\right)$.
\begin{defn}
Define $\Phi_{h}\left(\sigma\right)\in\tb$ as 
\[
\left\langle \Phi_{h}\left(\sigma\right),w\right\rangle :=\left\langle \Phi\left(\sigma^{h}\right),h\left(w\right)\right\rangle \text{ for }w\in\alph
\]
where $h\left(w\right):=h\left(a_{1}\right)\cdots h\left(a_{M}\right)\in\alphsmall^{\star}$
for $w=a_{1}\cdots a_{M}\in\alph$.
\end{defn}
If $\left|\alphsmall\right|<\left|\letters\right|$ then $h$ is not
injective, $h\left(a\right)=h\left(b\right)$ for $a\neq b$. These
collisions are the reason $\Phi_{h}\left(\sigma\right)$ overestimates
$\Phi\left(\sigma\right)$. 
\begin{lem}
\label{lem:overestimate}For any $h\in\alphsmall^{\letters}$ we have
\[
\left\langle \Phi\left(\sigma\right),w\right\rangle \leq\left\langle \Phi_{h}\left(\sigma\right),w\right\rangle \text{\text{ for all }}w\in\alph.
\]

\end{lem}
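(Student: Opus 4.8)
The plan is to show that the coordinate $\langle \Phi_h(\sigma), w\rangle$ equals the sum of the original coordinates $\langle \Phi(\sigma), w'\rangle$ over all words $w'$ that collide with $w$ under the letterwise hash, and then to discard the unwanted terms by positivity.

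First I would introduce the linear hash map $\hat h \colon \tb \to \tbsmall$ determined on words by $\hat h(w) = h(w)$. Since $h$ maps letters to letters, $\hat h$ is a homomorphism for the concatenation product, and for a single event the output $p(\lambda, a)$ lies in the subalgebra generated by the one letter $a$, so that $\hat h\big(p(\lambda, a)\big) = p(\lambda, h(a))$. Because $\Phi$ is multiplicative in the sense of (\ref{eq:mult p}), applying $\hat h$ to $\Phi(\sigma) = \prod_i p(\sigma_i)$ and using the homomorphism property yields the key intertwining identity $\hat h\big(\Phi(\sigma)\big) = \Phi(\sigma^h)$.

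Next I would pass to the adjoint of $\hat h$ under the pairing. Testing on basis words shows that this adjoint sends $v \in \alphsmall^\star$ to the formal sum $\sum_{w' : h(w') = v} w'$ of its whole preimage class. Combining this with the definition of the pull-back $\Phi_h$ and the intertwining identity gives
\[
\langle \Phi_h(\sigma), w\rangle = \langle \Phi(\sigma^h), h(w)\rangle = \big\langle \hat h\big(\Phi(\sigma)\big), h(w)\big\rangle = \sum_{w' : h(w') = h(w)} \langle \Phi(\sigma), w'\rangle .
\]

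Finally, since $\sigma$ is a cash-register stream (all $\lambda_i > 0$) and the event map has nonnegative coefficients, every coordinate $\langle \Phi(\sigma), w'\rangle$ is nonnegative; as $w$ lies in its own preimage class, discarding the remaining nonnegative terms leaves exactly $\langle \Phi(\sigma), w\rangle$, which is the claim. I expect the only substantive point to be the intertwining identity $\hat h(\Phi(\sigma)) = \Phi(\sigma^h)$; once it is established, the preimage decomposition and positivity conclude at once. A fully equivalent but more elementary route bypasses $\hat h$ by expanding $\Phi_w(\sigma)$ as a weighted subword count and using the pointwise bound $1_{a_{i_j} = a_j\, \forall j} \le 1_{h(a_{i_j}) = h(a_j)\, \forall j}$ together with $\lambda_i > 0$.
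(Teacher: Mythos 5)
Your proof is correct, but your primary route is genuinely different from the paper's. The paper proves the lemma in one line by appealing to the explicit coordinate formulas of Theorem \ref{thm:sig hopf algebra}: $\left\langle \Phi\left(\sigma^{h}\right),h\left(w\right)\right\rangle$ is the weighted count over index tuples $\boldsymbol{i}$ with $h\left(a\left(\boldsymbol{i}\right)\right)=h\left(w\right)$, which range over a superset of the tuples with $a\left(\boldsymbol{i}\right)=w$, and all terms are nonnegative since $\lambda_{i}>0$ --- exactly the ``elementary route'' you mention in your last sentence. Your main argument instead establishes the structural intertwining identity $\hat{h}\left(\Phi\left(\sigma\right)\right)=\Phi\left(\sigma^{h}\right)$, where $\hat{h}:\tb\rightarrow\tbsmall$ is the concatenation-algebra homomorphism induced letterwise by $h$, and then dualizes to obtain the exact collision decomposition $\left\langle \Phi_{h}\left(\sigma\right),w\right\rangle =\sum_{w^{\prime}:h\left(w^{\prime}\right)=h\left(w\right)}\left\langle \Phi\left(\sigma\right),w^{\prime}\right\rangle$; this is legitimate because each preimage class $\left\{ w^{\prime}:h\left(w^{\prime}\right)=h\left(w\right)\right\}$ consists of words of fixed length over the finite alphabet $\letters$, hence is finite, so the adjoint of $\hat{h}$ lands in $\ta$ and the extension of $\hat{h}$ to power series is well defined. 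This buys something concrete: the collision decomposition is precisely the content of the Proposition that follows the lemma in the paper (the explicit bias formula $\Phi_{h}\left(\sigma\right)=\Phi\left(\sigma\right)+b$), so your argument proves the lemma and that proposition simultaneously; it also works uniformly for any event map $p$ whose values are series in a single letter with nonnegative coefficients, rather than case-by-case for $p\left(\lambda,a\right)=\exp\left(\lambda a\right)$ and $p\left(\lambda,a\right)=1+\lambda a$, and it cleanly isolates the one place where positivity of the cash-register increments enters. The paper's route is shorter given that the coordinate formulas are already stated, but it implicitly re-derives those formulas for the hashed stream; yours replaces that computation with the single homomorphism observation.
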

Lemma follows directly from the formulas for $\Phi\left(\sigma\right)$
given in Theorem \ref{thm:sig hopf algebra}. In fact, we get the
following explicit expression for this bias.
\begin{prop}
Let $h\in\alphsmall^{\letters}$ and $\sigma=\left(\lambda_{i},a_{i}\right)_{i=1}^{L}\in\mathcal{S}$.
Then
\[
\Phi_{h}\left(\sigma\right)=\Phi\left(\sigma\right)+b\text{ and }\left\langle b,w\right\rangle =\begin{cases}
\sum_{\left(1\right)}\frac{\lambda\left(\boldsymbol{i}\right)}{\boldsymbol{i}!}1_{h\left(a\left(\boldsymbol{i}\right)\right)=h\left(w\right)} & \text{if }p\left(\lambda,a\right)=\exp\left(\lambda a\right),\\
\sum_{\left(2\right)}\lambda\left(\boldsymbol{i}\right)1_{h\left(a\left(\boldsymbol{i}\right)\right)=h\left(w\right)} & \text{ if }p\left(\lambda,a\right)=1+\lambda a.
\end{cases}
\]
where the sum $\sum_{\left(1\right)}$ is taken over all $\boldsymbol{i}=\left(i_{1},\ldots,i_{M}\right)$
such that $i_{1}\leq\cdots\leq i_{M}$ and $a\left(\boldsymbol{i}\right)\neq w$
where $a\left(\boldsymbol{i}\right):=a_{i_{1}}\cdots a_{i_{M}}$;
for the sum $\sum_{\left(2\right)}$ we additionally assume $i_{1}<\cdots<i_{M}$.\end{prop}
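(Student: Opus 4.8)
The plan is to reduce the statement to the explicit coefficient formulas for $\Phi\left(\sigma\right)$ recorded in Theorem \ref{thm:sig hopf algebra} and then to read off the bias $b$ as a difference of two index sets. Throughout, fix a word $w$ and write $M=\left|w\right|$; for a multi-index $\boldsymbol{i}=(i_1,\dots,i_M)$ set $a(\boldsymbol{i})=a_{i_1}\cdots a_{i_M}$, $\lambda(\boldsymbol{i})=\lambda_{i_1}\cdots\lambda_{i_M}$, and let $\boldsymbol{i}!$ denote the product of the factorials of the multiplicities of the entries of $\boldsymbol{i}$. The point to keep in mind is that all three quantities depend only on $\boldsymbol{i}$ and on the stream $\sigma$, never on the target word. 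Recalling Theorem \ref{thm:sig hopf algebra}, expanding $\Phi\left(\sigma\right)=\prod_{i}\exp(\lambda_i a_i)$ and collecting the contributions to $w$ gives
\[
\langle\Phi\left(\sigma\right),w\rangle=\sum_{\substack{i_1\leq\cdots\leq i_M\\ a(\boldsymbol{i})=w}}\frac{\lambda(\boldsymbol{i})}{\boldsymbol{i}!},
\]
while for $p(\lambda,a)=1+\lambda a$ the same computation with strict ordering $i_1<\cdots<i_M$ and weight $\lambda(\boldsymbol{i})$ applies.

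The crucial observation is that hashing only relabels letters and leaves the counter increases untouched, $\sigma^h=(\lambda_i,h(a_i))_i$. Hence the identical expansion applied to $\Phi(\sigma^h)\in\tbsmall$ at the target word $h(w)\in\alphsmall^\star$ yields
\[
\langle\Phi(\sigma^h),h(w)\rangle=\sum_{\substack{i_1\leq\cdots\leq i_M\\ h(a(\boldsymbol{i}))=h(w)}}\frac{\lambda(\boldsymbol{i})}{\boldsymbol{i}!},
\]
where $h(a(\boldsymbol{i}))=h(a_{i_1})\cdots h(a_{i_M})$, and analogously with strict inequalities in the $1+\lambda a$ case. By the definition of the pull-back, $\langle\Phi_h(\sigma),w\rangle$ is exactly this quantity: relative to the formula for $\langle\Phi\left(\sigma\right),w\rangle$, the matching condition $a(\boldsymbol{i})=w$ has simply been relaxed to $h(a(\boldsymbol{i}))=h(w)$.

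It then remains to subtract. Since $a(\boldsymbol{i})=w$ forces $h(a(\boldsymbol{i}))=h(w)$, the index set for $\langle\Phi\left(\sigma\right),w\rangle$ sits inside the index set for $\langle\Phi_h(\sigma),w\rangle$, and the summands $\lambda(\boldsymbol{i})/\boldsymbol{i}!$ agree on the overlap. Therefore the difference $\langle\Phi_h(\sigma),w\rangle-\langle\Phi\left(\sigma\right),w\rangle$ equals the sum over the complementary set $\{\,h(a(\boldsymbol{i}))=h(w),\ a(\boldsymbol{i})\neq w\,\}$, which is precisely $\langle b,w\rangle=\sum_{(1)}$ (respectively $\sum_{(2)}$). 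As $w$ was arbitrary and $\tb$ carries the coordinatewise vector-space structure, this gives $\Phi_h(\sigma)=\Phi\left(\sigma\right)+b$ with the claimed $b$; since every summand of $b$ is nonnegative, this simultaneously re-proves Lemma \ref{lem:overestimate}.

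The step I expect to demand the most care is the bookkeeping behind the second display: one has to verify that the weight $\lambda(\boldsymbol{i})$ and the multiplicity factor $\boldsymbol{i}!$ are attached to the multi-index $\boldsymbol{i}$ alone, so that passing the target word through $h$ merely enlarges the admissible set of multi-indices—by allowing each letter of $w$ to be matched by any letter in the same fiber of $h$—and never rescales a surviving term. Once this is pinned down the argument is immediate, and there is no analytic subtlety, as every sum involved is finite because $\sigma$ has finitely many events.
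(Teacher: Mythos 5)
Your proof is correct and takes essentially the same route as the paper's: both apply the coefficient formulas of Theorem \ref{thm:sig hopf algebra} to $\sigma$ and to the hashed stream $\sigma^{h}$, identify $\left\langle \Phi_{h}\left(\sigma\right),w\right\rangle $ with $\left\langle \Phi\left(\sigma^{h}\right),h\left(w\right)\right\rangle $ via the pull-back definition, and then decompose the enlarged index set. The paper phrases the final step as splitting the indicator $1_{h\left(a\left(\boldsymbol{i}\right)\right)=h\left(w\right)}=1_{a\left(\boldsymbol{i}\right)=w}+1_{a\left(\boldsymbol{i}\right)\neq w}1_{h\left(a\left(\boldsymbol{i}\right)\right)=h\left(w\right)}$, which is exactly your set-inclusion-and-subtract argument in different notation.
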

\begin{proof}
If $p\left(\lambda,a\right)=\exp\left(\lambda,a\right)$ (the other
case follows by similar arguments) then we know that $\left\langle \Phi\left(\sigma\right),w\right\rangle =\sum\frac{\lambda\left(\boldsymbol{i}\right)}{\boldsymbol{i}!}$
where the sum is over $\boldsymbol{i}$, $i_{1}\leq\cdots\leq i_{M}$
such that $a\left(\boldsymbol{i}\right)=w$ . Now from Theorem \ref{thm:sig hopf algebra}
we know that 
\[
\left\langle \Phi_{h}\left(\sigma\right),w\right\rangle \equiv\left\langle \Phi\left(\sigma^{h}\right),h\left(w\right)\right\rangle =\sum\frac{\lambda\left(\boldsymbol{i}\right)}{\boldsymbol{i}!}1_{h\left(a\left(\boldsymbol{i}\right)\right)=h\left(w\right)}
\]
with the sum over all $\boldsymbol{i}$, $i_{1}\leq\cdots\leq i_{M}$.
The statement follows by splitting $1_{h\left(a\left(\boldsymbol{i}\right)\right)=h\left(w\right)}=1_{a\left(\boldsymbol{i}\right)=w}+1_{a\left(\boldsymbol{i}\right)\neq w}1_{h\left(a\left(\boldsymbol{i}\right)\right)=h\left(w\right)}$.\end{proof}
\begin{cor}
\label{cor:hashed sig equals sig plus bias}Let $H$ be a $\alphsmall^{\letters}$-valued
random variable. For $\sigma=\left(\lambda_{i},a_{i}\right)_{i=1}^{L}\in\mathcal{S}$
we have
\begin{eqnarray*}
\mathbb{E}\left[\Phi_{H}\left(\sigma\right)\right] & = & \Phi\left(\sigma\right)+\bias\left(\sigma\right),\\
\text{ where }\left\langle \bias\left(\sigma\right),w\right\rangle  & = & \begin{cases}
\sum_{\left(1\right)}\frac{\lambda\left(\boldsymbol{i}\right)}{\boldsymbol{i}!}\mathbb{P}\left(H\left(a\left(\boldsymbol{i}\right)\right)=H\left(w\right)\right) & \text{if }p\left(\lambda,a\right)=\exp\left(\lambda a\right),\\
\sum_{\left(2\right)}\lambda\left(\boldsymbol{i}\right)\mathbb{P}\left(H\left(a\left(\boldsymbol{i}\right)\right)=H\left(w\right)\right) & \text{ if }p\left(\lambda,a\right)=1+\lambda a.
\end{cases}
\end{eqnarray*}
As a consequence, if $\mathbb{P}\left(H\left(w\right)=H\left(w^{\prime}\right)\right)\leq q$
for words $w\neq w^{\prime}$, $\left|w\right|=\left|w^{\prime}\right|=M$
then 
\[
0\leq\left\langle \bias\left(\sigma\right),w\right\rangle \leq q\left\Vert \Phi\left(\sigma\right)\right\Vert _{1,\left(M\right)}\text{ for }w\in\alph\text{ with }\left|w\right|=M.
\]
\end{cor}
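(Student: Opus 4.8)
The plan is to obtain the expectation formula by averaging the deterministic identity of the preceding Proposition over the random hash $H$, and then to read off the quantitative bound directly from the collision hypothesis. No new machinery is needed beyond what the Proposition already supplies.

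First I would fix a realisation $h$ of $H$ and invoke the preceding Proposition, which gives $\Phi_{h}\left(\sigma\right)=\Phi\left(\sigma\right)+b$ with $\left\langle b,w\right\rangle =\sum_{\left(1\right)}\frac{\lambda\left(\boldsymbol{i}\right)}{\boldsymbol{i}!}1_{h\left(a\left(\boldsymbol{i}\right)\right)=h\left(w\right)}$ in the case $p\left(\lambda,a\right)=\exp\left(\lambda a\right)$, and the analogous expression with $\sum_{\left(2\right)}$ and no factorial in the case $p\left(\lambda,a\right)=1+\lambda a$. Since $\sigma$ has finite length $L$, each of these sums ranges over only finitely many multi-indices $\boldsymbol{i}$, so I may interchange expectation with the sum without any convergence issue. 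Applying $\mathbb{E}$ coordinate-wise and using $\mathbb{E}\left[1_{H\left(a\left(\boldsymbol{i}\right)\right)=H\left(w\right)}\right]=\mathbb{P}\left(H\left(a\left(\boldsymbol{i}\right)\right)=H\left(w\right)\right)$ yields the stated formula for $\left\langle \bias\left(\sigma\right),w\right\rangle$, and hence $\mathbb{E}\left[\Phi_{H}\left(\sigma\right)\right]=\Phi\left(\sigma\right)+\bias\left(\sigma\right)$.

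Next I would prove the consequence. The lower bound $\left\langle \bias\left(\sigma\right),w\right\rangle \geq0$ is immediate: in the cash-register setting every $\lambda_{i}>0$, so each summand $\frac{\lambda\left(\boldsymbol{i}\right)}{\boldsymbol{i}!}\mathbb{P}\left(\cdots\right)$ is a product of nonnegative quantities. For the upper bound I use that every multi-index occurring in the sum satisfies $a\left(\boldsymbol{i}\right)\neq w$, while $a\left(\boldsymbol{i}\right)=a_{i_{1}}\cdots a_{i_{M}}$ and $w$ both have length $M$; the collision hypothesis therefore applies to the distinct, equal-length pair $\left(a\left(\boldsymbol{i}\right),w\right)$ and gives $\mathbb{P}\left(H\left(a\left(\boldsymbol{i}\right)\right)=H\left(w\right)\right)\leq q$. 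Factoring out $q$ reduces the task to bounding $\sum_{\left(1\right)}\frac{\lambda\left(\boldsymbol{i}\right)}{\boldsymbol{i}!}$ (respectively $\sum_{\left(2\right)}\lambda\left(\boldsymbol{i}\right)$) by $\left\Vert \Phi\left(\sigma\right)\right\Vert _{1,\left(M\right)}$.

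The one step needing a little care, and which I regard as the crux, is recognising this remaining sum as a truncation of the $1$-norm. I would regroup the multi-indices by the word they spell out: setting $w^{\prime}=a\left(\boldsymbol{i}\right)$ and summing first over those $\boldsymbol{i}$ with $a\left(\boldsymbol{i}\right)=w^{\prime}$ reproduces exactly the coefficient $\left\langle \Phi\left(\sigma\right),w^{\prime}\right\rangle$, by the formula for $\Phi\left(\sigma\right)$ from Theorem \ref{thm:sig hopf algebra} already used in the previous proof. Thus $\sum_{\left(1\right)}\frac{\lambda\left(\boldsymbol{i}\right)}{\boldsymbol{i}!}=\sum_{\left|w^{\prime}\right|=M,\,w^{\prime}\neq w}\left\langle \Phi\left(\sigma\right),w^{\prime}\right\rangle$. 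Because all these coefficients are nonnegative, dropping the constraint $w^{\prime}\neq w$ only enlarges the sum, so it is bounded by $\sum_{\left|w^{\prime}\right|=M}\left\langle \Phi\left(\sigma\right),w^{\prime}\right\rangle =\left\Vert \Phi\left(\sigma\right)\right\Vert _{1,\left(M\right)}$, where the absolute values in the definition of the pseudo-norm may be dropped precisely because of this nonnegativity. Combining with the factored-out $q$ gives $\left\langle \bias\left(\sigma\right),w\right\rangle \leq q\left\Vert \Phi\left(\sigma\right)\right\Vert _{1,\left(M\right)}$. The argument is otherwise routine; the only genuinely load-bearing ingredient is the nonnegativity of the coefficients coming from $\lambda_{i}>0$, which is what lets the truncated $1$-norm be handled with no sign cancellation.
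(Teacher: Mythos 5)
Your proof is correct and takes essentially the same route as the paper's: both average the preceding Proposition's identity over $H$, bound each collision probability by $q$, and identify the full sum $\sum_{\boldsymbol{i}:i_{1}\leq\cdots\leq i_{M}}\frac{\lambda\left(\boldsymbol{i}\right)}{\boldsymbol{i}!}$ with $\left\Vert \Phi\left(\sigma\right)\right\Vert _{1,\left(M\right)}$ by grouping multi-indices according to the word $a\left(\boldsymbol{i}\right)$ they spell. The paper compresses this into a single line while you additionally spell out the (finitely supported, hence unproblematic) interchange of expectation and sum, the applicability of the collision hypothesis to the pair $\left(a\left(\boldsymbol{i}\right),w\right)$, and the nonnegativity yielding the lower bound, but no genuinely different idea is involved.
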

\begin{proof}
Note that $\sum_{\boldsymbol{i}:i_{1}\leq\cdots\leq i_{M}}\frac{\lambda\left(\boldsymbol{i}\right)}{\boldsymbol{i}!}=\sum_{\boldsymbol{i}:i_{1}\leq\cdots\leq i_{M}}\frac{\lambda\left(\boldsymbol{i}\right)}{\boldsymbol{i}!}\sum_{w}1_{a\left(\boldsymbol{i}\right)=w}=\sum_{w:\left|w\right|=M}\Phi_{w}\left(\sigma\right)\equiv\left\Vert \Phi\left(\sigma\right)\right\Vert _{1,\left(M\right)}$,
hence the statement follows since $\mathbb{E}\left[\sum_{\left(1\right)}\frac{\lambda\left(\boldsymbol{i}\right)}{\boldsymbol{i}!}1_{H\left(a\left(\boldsymbol{i}\right)\right)=H\left(w\right)}\right]\leq q\sum_{\boldsymbol{i}:i_{1}\leq\cdots\leq i_{M}}\frac{\lambda\left(\boldsymbol{i}\right)}{\boldsymbol{i}!}$.
\end{proof}

\subsection{Combining independent hashes}

We get tail estimates for $\Phi_{H}\left(\sigma\right)$ for a randomly
chosen $H$.
\begin{prop}
\label{prop: markov}Let $M\in\mathbb{N}$ and $H$ be a $\alphsmall^{\letters}$-valued
random variable. Then
\begin{enumerate}
\item $\mathbb{P}\left(\left\Vert \Phi_{H}\left(\sigma\right)-\Phi\left(\sigma\right)\right\Vert _{1,M}>x\right)\leq\frac{\left\Vert \bias\left(\sigma\right)\right\Vert _{1,M}}{x}$
for $x>\left\Vert \bias\left(\sigma\right)\right\Vert _{1,M}$,
\item $\mathbb{P}\left(\left\langle \Phi\left(\sigma\right),w\right\rangle \in\left[\left\langle \Phi_{H}\left(\sigma\right),w\right\rangle -x,\left\langle \Phi_{H}\left(\sigma\right),w\right\rangle \right]\right)\geq1-\frac{\left\langle \bias\left(\sigma\right),w\right\rangle }{x}$
\textup{for $w\in\alph$, $x>\left\langle \bias\left(\sigma\right),w\right\rangle $}
\end{enumerate}
As a consequence, if $\mathbb{P}\left(H\left(w\right)=H\left(v\right)\right)\leq q$
for words $w\neq v$, $\left|w\right|=\left|v\right|=M$ then
\[
\mathbb{P}\left(\left\langle \Phi\left(\sigma\right),w\right\rangle \in\left[\left\langle \Phi_{H}\left(\sigma\right),w\right\rangle -2q\left\Vert \Phi\left(\sigma\right)\right\Vert _{1,\left(M\right)},\left\langle \Phi_{H}\left(\sigma\right),w\right\rangle \right]\right)\geq\frac{1}{2}.
\]
\end{prop}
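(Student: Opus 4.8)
The plan is to reduce all three assertions to Markov's inequality applied to nonnegative random variables, where the nonnegativity is supplied by Lemma \ref{lem:overestimate}. The decisive structural fact is that hashing can only \emph{overestimate}: for every realization $h$ of the random hash $H$ one has $\left\langle \Phi(\sigma), w\right\rangle \leq \left\langle \Phi_h(\sigma), w\right\rangle$ for all $w$, so the error $\Phi_H(\sigma) - \Phi(\sigma)$ is coordinatewise nonnegative. This one-sidedness is exactly why a first-moment bound suffices, and why the confidence interval in (2) sits entirely \emph{below} the observed value $\left\langle \Phi_H(\sigma), w\right\rangle$.

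For part (1) I would first use coordinatewise nonnegativity to drop the absolute values in the $\left\Vert\cdot\right\Vert_{1,M}$ seminorm, writing
\[
\left\Vert \Phi_H(\sigma) - \Phi(\sigma)\right\Vert_{1,M} = \sum_{w:\,|w|\leq M} \left\langle \Phi_H(\sigma) - \Phi(\sigma), w\right\rangle .
\]
Taking expectations and invoking $\mathbb{E}\left[\Phi_H(\sigma)\right] = \Phi(\sigma) + \bias(\sigma)$ from Corollary \ref{cor:hashed sig equals sig plus bias} (using again that $\bias(\sigma)$ is nonnegative) yields $\mathbb{E}\left\Vert \Phi_H(\sigma) - \Phi(\sigma)\right\Vert_{1,M} = \left\Vert \bias(\sigma)\right\Vert_{1,M}$. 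Markov's inequality for this nonnegative variable then gives the stated tail bound; the hypothesis $x > \left\Vert \bias(\sigma)\right\Vert_{1,M}$ only ensures the bound is nontrivial.

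For part (2) I would fix a word $w$ and set $Y := \left\langle \Phi_H(\sigma), w\right\rangle - \left\langle \Phi(\sigma), w\right\rangle \geq 0$. By Lemma \ref{lem:overestimate} the upper endpoint $\left\langle \Phi(\sigma),w\right\rangle \leq \left\langle \Phi_H(\sigma),w\right\rangle$ holds almost surely, so the event in question is precisely $\left\{Y \leq x\right\}$. Since $\mathbb{E}[Y] = \left\langle \bias(\sigma), w\right\rangle$ by Corollary \ref{cor:hashed sig equals sig plus bias}, Markov gives $\mathbb{P}(Y > x) \leq \left\langle \bias(\sigma),w\right\rangle / x$, which is the claim. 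The concluding consequence is then immediate: specialize (2) to $x = 2q\left\Vert \Phi(\sigma)\right\Vert_{1,\left(M\right)}$ and insert $\left\langle \bias(\sigma), w\right\rangle \leq q\left\Vert \Phi(\sigma)\right\Vert_{1,\left(M\right)}$ from Corollary \ref{cor:hashed sig equals sig plus bias}; the factor $2$ is chosen exactly so that the probability lands at $1 - \tfrac12 = \tfrac12$.

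I do not anticipate a genuine obstacle. Once the one-sidedness of the hashing error is recognized, everything is a direct first-moment computation. The only point needing care is the bookkeeping that tracks which quantities are nonnegative, so that the $\left\Vert\cdot\right\Vert_{1,M}$-seminorm collapses to a plain sum and its expectation pairs cleanly against $\bias(\sigma)$.
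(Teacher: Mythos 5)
Your proposal is correct and follows essentially the same route as the paper's own proof: Markov's inequality applied to the coordinatewise-nonnegative error $\Phi_{H}\left(\sigma\right)-\Phi\left(\sigma\right)$, with the expectation supplied by Corollary \ref{cor:hashed sig equals sig plus bias} and the final bound obtained by setting $x=2q\left\Vert \Phi\left(\sigma\right)\right\Vert _{1,\left(M\right)}$. Your write-up is in fact slightly more explicit than the paper's about where Lemma \ref{lem:overestimate} enters (dropping absolute values in part (1), identifying the interval event with $\left\{ Y\leq x\right\} $ in part (2)), but the argument is the same.
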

\begin{proof}
We apply the Markov inequality and Corollary \ref{cor:hashed sig equals sig plus bias}
to 
\[
\left\Vert \Phi_{H}\left(\sigma\right)-\Phi\left(\sigma\right)\right\Vert _{1,M}=\sum_{w:\left|w\right|\leq M}\left\langle \Phi_{H}\left(\sigma\right)-\Phi\left(\sigma\right),w\right\rangle 
\]
to get 
\[
\mathbb{P}\left(\left\Vert \Phi_{H}\left(\sigma\right)-\Phi\left(\sigma\right)\right\Vert _{1,M}>x\right)\leq\frac{\left\Vert \bias\left(\sigma\right)\right\Vert _{1,M}}{x}.
\]
Similarly, applying Markov's inequality coordinate-wise shows 
\[
\mathbb{P}\left(\left|\left\langle \Phi_{H}\left(\sigma\right),w\right\rangle -\left\langle \Phi\left(\sigma\right),w\right\rangle \right|>x\right)\leq\frac{\left\langle \bias\left(\sigma\right),w\right\rangle }{x}.
\]
Hence, 
\[
\mathbb{P}\left(\left\langle \Phi\left(\sigma\right),w\right\rangle \in\left[\left\langle \Phi_{H}\left(\sigma\right),w\right\rangle -x,\left\langle \Phi_{H}\left(\sigma\right),w\right\rangle \right]\right)\geq1-\frac{\left\langle \bias\left(\sigma\right),w\right\rangle }{x}.
\]
By Corollary \ref{cor:hashed sig equals sig plus bias}, 
\[
\left\langle \bias\left(\sigma\right),w\right\rangle =\begin{cases}
\sum_{\substack{\left(1\right)}
}\frac{\lambda\left(\boldsymbol{i}\right)}{\boldsymbol{i}!}\mathbb{P}\left(H\left(a\left(\boldsymbol{i}\right)\right)=H\left(w\right)\right) & \text{if }p\left(\lambda,a\right)=\exp\left(\lambda a\right),\\
\sum_{\left(2\right)}\lambda\left(\boldsymbol{i}\right)\mathbb{P}\left(H\left(a\left(\boldsymbol{i}\right)\right)=H\left(w\right)\right) & \text{ if }p\left(\lambda,a\right)=1+\lambda a.
\end{cases}
\]
and $\left\langle \bias\left(\sigma\right),w\right\rangle \leq q\left\Vert \Phi\left(\sigma\right)\right\Vert _{1,\left(M\right)}$.
This shows 
\[
\mathbb{P}\left(\left\langle \Phi\left(\sigma\right),w\right\rangle \in\left[\left\langle \Phi_{H}\left(\sigma\right),w\right\rangle -x,\left\langle \Phi_{H}\left(\sigma\right),w\right\rangle \right]\right)\geq1-q\frac{\left\Vert \Phi\left(\sigma\right)\right\Vert _{1,\left(M\right)}}{x}\text{ for }x>q\left\Vert \Phi\left(\sigma\right)\right\Vert _{1,\left(M\right)}
\]
The statement follows by choosing $x=2q\left\Vert \Phi\left(\sigma\right)\right\Vert _{1,\left(M\right)}$.
\end{proof}
Since $\Phi_{H}$ overestimates $\Phi$, Lemma \ref{lem:overestimate},
we can take independent copies of $H$ and take the coordinate-wise
minimum to combine these estimators.
\begin{prop}
\label{lem:ind hashes}Let $H_{1},\ldots H_{r}$ be independently
and identically distributed $\alphsmall^{\letters}$-valued random
variables. Assume there exists a $q>0$ such that for 
\[
\mathbb{P}\left(H_{j}\left(w\right)=H_{j}\left(w^{\prime}\right)\right)\leq q\text{ for }j=1,\ldots,r\text{ and }w,w^{\prime}\in\alph,w\neq w.
\]
Then for any $w\in\alph$
\begin{enumerate}
\item \textup{$\mathbb{P}\left(\min_{j=1,\ldots,r}\left\langle \Phi_{H_{j}}\left(\sigma\right),w\right\rangle -\left\langle \Phi\left(\sigma\right),w\right\rangle >2q\left\Vert \Phi\left(\sigma\right)\right\Vert _{1,\left(M\right)}\right)\leq2^{-r}$,}
\item \textup{$\mathbb{P}\left(\left\langle \Phi\left(\sigma\right),w\right\rangle \in\left[\min_{j}\left\langle \Phi_{H_{j}}\left(\sigma\right),w\right\rangle -2q\left\Vert \Phi\left(\sigma\right)\right\Vert _{1,\left(M\right)},\min_{j}\left\langle \Phi_{H_{j}}\left(\sigma\right),w\right\rangle \right]\right)\geq1-2^{-r}$}
\end{enumerate}
where $M=\left|w\right|$ denotes the length of the word $w\in\alph$.\end{prop}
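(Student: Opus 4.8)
The plan is to combine two facts already in hand: the sure (deterministic) overestimation from Lemma \ref{lem:overestimate}, and the single-hash tail bound from the ``consequence'' part of Proposition \ref{prop: markov}, and then to convert the $\tfrac12$ per-hash failure probability into $2^{-r}$ by exploiting the independence of $H_1,\ldots,H_r$. The key structural observation is that taking a coordinate-wise minimum makes the aggregate failure event an \emph{intersection} of the single-hash failure events, so independence multiplies probabilities rather than merely adding them.

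First I would record the one-sided bound. By Lemma \ref{lem:overestimate} every $\Phi_{H_j}$ overestimates $\Phi$ coordinate-wise surely, so $\left\langle \Phi\left(\sigma\right),w\right\rangle \leq\left\langle \Phi_{H_{j}}\left(\sigma\right),w\right\rangle$ for each $j$ with probability one, and hence also $\left\langle \Phi\left(\sigma\right),w\right\rangle \leq\min_{j}\left\langle \Phi_{H_{j}}\left(\sigma\right),w\right\rangle$ almost surely. This already secures the upper endpoint of the interval in assertion (2) deterministically, so only the lower endpoint needs a probabilistic argument. Because of this sure inequality, for a single hash the interval-membership event in Proposition \ref{prop: markov} (taken with $x=2q\left\Vert \Phi\left(\sigma\right)\right\Vert _{1,\left(M\right)}$) is exactly the event $\left\langle \Phi_{H_{j}}\left(\sigma\right),w\right\rangle -\left\langle \Phi\left(\sigma\right),w\right\rangle \leq 2q\left\Vert \Phi\left(\sigma\right)\right\Vert _{1,\left(M\right)}$, whose complement therefore has probability at most $\tfrac12$.

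Next I would use the elementary identity $\min_{j}X_{j}>t \iff X_{j}>t \text{ for all } j$ applied to $X_{j}=\left\langle \Phi_{H_{j}}\left(\sigma\right),w\right\rangle -\left\langle \Phi\left(\sigma\right),w\right\rangle$, together with the independence of the $H_{j}$, to factorize
\[
\mathbb{P}\Bigl(\min_{j}\left\langle \Phi_{H_{j}}\left(\sigma\right),w\right\rangle -\left\langle \Phi\left(\sigma\right),w\right\rangle >2q\left\Vert \Phi\left(\sigma\right)\right\Vert _{1,\left(M\right)}\Bigr)=\prod_{j=1}^{r}\mathbb{P}\Bigl(\left\langle \Phi_{H_{j}}\left(\sigma\right),w\right\rangle -\left\langle \Phi\left(\sigma\right),w\right\rangle >2q\left\Vert \Phi\left(\sigma\right)\right\Vert _{1,\left(M\right)}\Bigr)\leq 2^{-r},
\]
which is precisely assertion (1). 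Assertion (2) then follows by combining this with the sure upper bound from the first step: the interval fails only when its lower endpoint is violated, which is the event just bounded by $2^{-r}$, so the interval holds with probability at least $1-2^{-r}$.

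I do not expect a serious obstacle here; the content is a clean assembly of earlier results. The one point requiring care is the direction of the overestimation and the resulting identification of the failure event of the minimum as the \emph{intersection} (``all $j$ fail'') of the single-hash failures rather than their union. Getting this right is exactly what lets independence turn $\tfrac12$ into $2^{-r}$; confusing intersection with union would instead give a useless union bound. Everything else, including rewriting interval membership as a one-sided inequality via Lemma \ref{lem:overestimate}, is routine bookkeeping.
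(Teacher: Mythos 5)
Your proof is correct and follows essentially the same route as the paper's: apply Proposition \ref{prop: markov} with $x=2q\left\Vert \Phi\left(\sigma\right)\right\Vert _{1,\left(M\right)}$, use the sure overestimation from Lemma \ref{lem:overestimate} to identify the failure of the minimum as the intersection of the per-hash failure events, and factorize by independence to get $2^{-r}$. Your explicit handling of assertion (2) via the deterministic upper endpoint is slightly more detailed than the paper's (which leaves that step implicit), but the argument is the same.
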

\begin{proof}
Using Proposition \ref{prop: markov} with $x=2q\left\Vert \Phi\left(\sigma\right)\right\Vert _{1,\left(M\right)}$,
independence and $\left\langle \Phi_{H_{j}}\left(\sigma\right),w\right\rangle \geq\left\langle \Phi\left(\sigma\right),w\right\rangle $
yields 
\begin{eqnarray*}
\mathbb{P}\left(\min_{i=1,\ldots,r}\left\langle \Phi_{H_{i}}\left(\sigma\right),w\right\rangle -\left\langle \Phi\left(\sigma\right),w\right\rangle >x\right) & = & \prod_{i=1}^{r}\mathbb{P}\left(\left\langle \Phi_{H_{i}}\left(\sigma\right),w\right\rangle -\left\langle \Phi\left(\sigma\right),w\right\rangle >x\right)\\
 & = & \prod_{i=1}^{r}\mathbb{P}\left(\left\langle \Phi\left(\sigma\right),w\right\rangle \notin\left[\left\langle \Phi_{H_{i}}\left(\sigma\right),w\right\rangle -x,\left\langle \Phi_{H_{i}}\left(\sigma\right),w\right\rangle \right]\right)\\
 & \leq & 2^{-r}.
\end{eqnarray*}

\end{proof}
Unfortunately, sampling uniformly from $\alphsmall^{\letters}$ is
too expensive: there are $\left|\alphsmall\right|^{\left|\letters\right|}$
such functions to choose from, so specifying an element of $\alphsmall^{\letters}$
requires $\left|\letters\right|\log\left|\alphsmall\right|$ bits
of memory. This is prohibitively expensive for our applications where
$\letters$ is a very large set. Fortunately, this a classic topic
in computer science solved by universal hashes.

\subsection{Universal hashes}

We sample uniformly from a subset\emph{ $\mathcal{H}\subset\alphsmall^{\letters}$}.
Our goal is to construct a set $\mathcal{H}$ such that the assumptions
of Proposition \ref{lem:ind hashes} are met, that is we need to bound
$\mathbb{P}\left(H\left(w\right)\neq H\left(w\right)\right)$ for
$w\neq w^{\prime}$. Constructing such families of functions is not
trivial but a classic topic in hashing.
\begin{defn}
Fix $\mathcal{H}\subset\alphsmall^{\letters}$. Let $H$ be a chosen
uniformly at random from $\mathcal{H}$. We call $\mathcal{H}$ a
2-universal family of hash functions if
\[
\mathbb{P}\left(H\left(a\right)=H\left(b\right)\right)\leq\left|\alphsmall\right|^{-1}
\]
for $a\neq b$. 
\end{defn}
Since $\mathbb{P}\left(H\left(a_{1}\cdots a_{M}\right)=H\left(b_{1}\cdots b_{M}\right)\right)\leq\sup_{i}\mathbb{P}\left(H\left(a_{i}\right)=H\left(b_{i}\right)\right)$
this is enough to apply Proposition \ref{lem:ind hashes}.
\begin{example}
Let $\letters=\left\{ 1,\ldots,m\right\} $, $\alphsmall=\left\{ 1,\ldots,n\right\} $.
For any prime $p\geq m$, the set 
\[
\mathcal{H}=\left\{ h_{a,b}\lvert h_{a,b}\left(x\right)=\left(\left(\left(ax+b\right)\mod p\right)\mod n\right),1\leq a\leq p-1,0\leq b\leq p-1\right\} \subset\alphsmall^{\letters}
\]
is $2$-universal. Hence choosing a random element of $\mathcal{H}$
requires $2\log p$ random bits.
\end{example}

\subsection{Sketching features}

Since for every random hash, the estimator $\Phi_{H}\left(\sigma\right)$
overestimates $\Phi\left(\sigma\right)$ we simply take the minimum
to get a better estimator $\hat{\Phi}\left(\sigma\right)$ for $\Phi\left(\sigma\right)$.
The specific choice of alphabet size and number of hashes will become
clear from the probabilistic bounds given in Proposition \ref{thm:complexity}.
\begin{defn}
For $\epsilon,\delta>0$ let $\mathcal{H}\subset\alphsmall^{\letters}$
be a $2$-universal family into an alphabet $\alphsmall$ of cardinality
$\left|\alphsmall\right|=\left\lceil 2\epsilon^{-1}\right\rceil $.
Draw $r=\left\lceil \log\delta\right\rceil $ elements $H_{1},\ldots,H_{r}$
independently and uniformly from $\mathcal{H}$ and define the random
map 
\[
\sigma\mapsto\hat{\Phi}\left(\sigma\right)\in\tb\text{ by }\hat{\Phi}_{w}\left(\sigma\right):=\min_{i=1,\ldots,M}\left\langle \Phi_{H_{i}}\left(\sigma\right),w\right\rangle \text{ for }w\in\alph.
\]
We call $\hat{\Phi}\left(\sigma\right)$ the \emph{$\left(\epsilon,\delta\right)$-count-min-sketch}
of $\Phi\left(\sigma\right)$.
\end{defn}
Applying the previous estimates yields
\begin{prop}
\label{thm:complexity}Let $\hat{\Phi}\left(\sigma\right)$ be the
$\left(\epsilon,\delta\right)$-count-min-sketch of $\Phi\left(\sigma\right)$.
Then 

\[
\mathbb{P}\left(\Phi_{w}\left(\sigma\right)\in\left[\hat{\Phi}_{w}\left(\sigma\right)-\epsilon\left\Vert \Phi\left(\sigma\right)\right\Vert _{1,\left(M\right)},\hat{\Phi}_{w}\left(\sigma\right)\right]\right)\geq1-\delta\text{ for all }w\in\alph\text{ with }\left|w\right|=M.
\]
Algorithm \ref{alg:Signature-sketch} computes \textup{for given $\left(\epsilon,\delta,M\right)$
the set of coordinates 
\[
\left\{ \hat{\Phi}_{w}:\left|w\right|\leq M\right\} 
\]
}with a single pass over $\sigma$ using $O\left(\frac{1}{\varepsilon^{M}}\log\frac{1}{\delta}\right)$
memory units and $\left\lceil -\log\delta\right\rceil \log\left|\letters\right|$
random bits. \end{prop}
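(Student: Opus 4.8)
The plan is to split the statement into its two assertions---the probabilistic accuracy guarantee and the resource bounds of Algorithm~\ref{alg:Signature-sketch}---and to treat them separately. The accuracy guarantee is an assembly of estimates already in hand, so I would dispatch it first; the real content lies in the complexity accounting, where the point is that the object actually stored lives over the small alphabet $\alphsmall$ rather than over $\letters$.

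For the accuracy bound, the first step is to turn $2$-universality into the collision hypothesis of Proposition~\ref{lem:ind hashes}. For single letters, $2$-universality gives $\mathbb{P}(H(a)=H(b))\le|\alphsmall|^{-1}$ when $a\neq b$. For two distinct words $w=a_1\cdots a_M$ and $v=b_1\cdots b_M$ of the common length $M$, the equality $H(w)=H(v)$ in $\alphsmall^{\star}$ forces $H(a_j)=H(b_j)$ for every $j$; picking an index $i$ with $a_i\neq b_i$ gives the inclusion $\{H(w)=H(v)\}\subseteq\{H(a_i)=H(b_i)\}$ and hence
\[
\mathbb{P}(H(w)=H(v))\le|\alphsmall|^{-1}=\lceil 2\epsilon^{-1}\rceil^{-1}\le\tfrac{\epsilon}{2}=:q .
\]
Feeding $q\le\epsilon/2$ and $r=\lceil\log_2(1/\delta)\rceil$ into Proposition~\ref{lem:ind hashes}(2) yields, for every $w$ with $|w|=M$,
\[
\mathbb{P}\bigl(\Phi_w(\sigma)\in[\hat\Phi_w(\sigma)-2q\left\Vert \Phi(\sigma)\right\Vert _{1,\left(M\right)},\ \hat\Phi_w(\sigma)]\bigr)\ge 1-2^{-r}.
\]
Because $2q\le\epsilon$ only enlarges the admissible interval and $2^{-r}\le\delta$, this is exactly the claimed bound $\ge 1-\delta$.

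For the resource bounds I would make precise what Algorithm~\ref{alg:Signature-sketch} stores. The key observation, already built into the definition $\langle\Phi_{H_i}(\sigma),w\rangle=\langle\Phi(\sigma^{H_i}),h_i(w)\rangle$, is that one never materialises the infinite-dimensional pull-back $\Phi_{H_i}(\sigma)\in\tb$: for each $i=1,\dots,r$ it suffices to maintain the truncated feature $\Phi_M(\sigma^{H_i})\in\tbsmall$ of the hashed stream, whose coordinates are indexed by the words of length $\le M$ over the small alphabet $\alphsmall$. There are $\sum_{k=0}^{M}|\alphsmall|^{k}=O(|\alphsmall|^{M})=O(\epsilon^{-M})$ such words, so each hash costs $O(\epsilon^{-M})$ memory units and the $r=O(\log(1/\delta))$ hashes give the total $O(\epsilon^{-M}\log(1/\delta))$. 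The single pass is inherited from the multiplicative update~\eqref{eq:mult p}: on arrival of $\sigma_{L+1}=(\lambda_{L+1},a_{L+1})$ one computes $h_i(a_{L+1})$ and right-multiplies the stored $\Phi_M(\sigma^{H_i})$ by $p(\lambda_{L+1},h_i(a_{L+1}))$, re-truncating at degree $M$, using only the current state and the incoming event; at query time $\hat\Phi_w(\sigma)=\min_i\langle\Phi(\sigma^{H_i}),h_i(w)\rangle$ is read off by the pull-back. Each $H_i$ is specified by an element of a $2$-universal family over $\letters$ (for instance the affine family $h_{a,b}$ of the preceding example), costing $O(\log|\letters|)$ bits, so the $r$ hashes together consume on the order of $\lceil-\log\delta\rceil\log|\letters|$ random bits.

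The main obstacle I anticipate is the complexity bookkeeping rather than the probability estimate. One must verify that the stored state is genuinely $\Phi_M(\sigma^{H_i})$ over $\alphsmall$, of size $O(\epsilon^{-M})$, carrying no dependence on $|\letters|$, and that every coordinate $\hat\Phi_w(\sigma)$ with $|w|\le M$ is recoverable from this state alone through the pull-back. The logarithmic dependence on $|\letters|$ then enters solely through the description length of the hash functions---which is precisely the feature that makes the sketch worthwhile.
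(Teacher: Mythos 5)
Your proposal is correct and follows essentially the same route as the paper's proof: reduce word collisions to letter collisions via $\mathbb{P}\left(H\left(w\right)=H\left(v\right)\right)\leq\sup_{i}\mathbb{P}\left(H\left(a_{i}\right)=H\left(b_{i}\right)\right)\leq\left|\alphsmall\right|^{-1}$, apply Proposition \ref{lem:ind hashes} with $q=\epsilon/2$ and $r=\left\lceil -\log\delta\right\rceil$, and count $O\left(\left|\alphsmall\right|^{M}\right)=O\left(\epsilon^{-M}\right)$ stored coordinates per hash plus $O\left(\log\left|\letters\right|\right)$ random bits per hash function. Your added details (the explicit inclusion argument for word collisions and the verification that only the truncated hashed-stream features over $\alphsmall$ are ever materialised) are points the paper treats more tersely, but the argument is the same.
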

\begin{proof}
By universality of $\mathcal{H}$ we have $\mathbb{P}\left(H\left(w\right)=H\left(w^{\prime}\right)\right)\leq\sup_{a,b\in\letters,a\neq b}\mathbb{P}\left(H\left(a\right)=H\left(b\right)\right)\leq\left|\alphsmall\right|^{-1}$.
Thus the assumptions of Proposition \ref{lem:ind hashes} are met
with $q=2^{-1}\epsilon$. Using the $2$-universal hash family, we
can store $r$ hash functions in $2r\log\left|\letters\right|$ random
bits. Each $\left\langle \Phi_{H}\left(\sigma\right),w\right\rangle $
for $\left|w\right|\leq M$ requires $O\left(\left|\alphsmall\right|^{M}\right)=O\left(\epsilon^{-M}\right)$
of storage, thus for $r\equiv\left\lceil -\log\delta\right\rceil $
hashes we need $O\left(\frac{1}{\varepsilon^{M}}\log\frac{1}{\delta}\right)$
memory units. 
\end{proof}

\section{\label{sec:Algorithms,-Applications-and}Applications: pattern queries
and heavy hitter patterns}

\subsection{Pattern queries}

Theorem \ref{thm:main} allows to estimate patterns in the stream
$\sigma$. For the event map $p\left(\lambda,a\right)=1+\lambda a$,
recall that the resulting features are 
\[
\Phi_{w}\left(\sigma\right)=\sum_{\left(1\right)}\lambda_{i_{1}}\cdots\lambda_{i_{M}}
\]
with the sum taken over all $i_{1}<\cdots<i_{M}$ such that $a_{i_{1}}\cdots a_{i_{M}}=w$.
Thus they count how often a word $w$ appears within the stream $\sigma$,
weighted by the counter increases. Algorithm \ref{alg:Signature-sketch}
calculates the estimator $\hat{\Phi}^{\left(M\right)}\left(\sigma\right)$
that is given in Theorem \ref{thm:main}. The same applies with $p\left(\lambda,a\right)=\exp\left(\lambda,a\right)$
though in this case we allow for $i_{1}\leq\cdots\leq i_{M}$ and
account for this with a factorial $\frac{1}{\boldsymbol{i}!}$. .

\subsection{Patters of heavy hitters}

Fix a threshold $\rho>0$ and $M\in\mathbb{N}$. Denote 
\[
\mathcal{H}\left(\sigma\right)=\left\{ a\in\letters:\Phi_{a}\left(\sigma\right)\geq\rho\right\} 
\]
where we recall the definition $\left\Vert \Phi\left(\sigma\right)\right\Vert _{1,M}=\sum_{w:\left|w\right|=M}\Phi_{w}\left(\sigma\right)$.
The standard (count-min) sketch produces a random set $\hat{\mathcal{H}}$
such that $\mathcal{H}\subset\mathcal{\hat{H}}$ and $\mathcal{\hat{H}}\backslash\mathcal{H}$
is small. Recall one of our motivations in the introduction was to
learn effects/functions of streams 
\[
f:\mathcal{S}\rightarrow\mathbb{R}
\]
from the sketch $\hat{\Phi}\left(\sigma\right)$. A common scenario
is that $f\left(\sigma\right)$ is well approximated by the stream
of heavy hitters, i.e. $f\left(\sigma\right)\approx g\left(\sigma^{\mathcal{H}}\right)$
where $g$ denotes simply the restriction of $f$ to streams in letters
in $\mathcal{H}\equiv\mathcal{H}\left(\sigma\right)$, and $\sigma^{\mathcal{H}}=\left(a_{i},\lambda_{i}\right)_{a_{i}\in\mathcal{H}}$.
Combining the universality $\Phi$, Proposition \ref{prop:universal},
with the sketch of Theorem \ref{thm:main} yields 

\begin{eqnarray*}
f\left(\sigma\right) & \approx & g\left(\sigma^{\mathcal{H}}\right)\\
 & \approx & \left\langle \ell,\Phi\left(\sigma^{\mathcal{H}}\right)\right\rangle \\
 & = & \sum_{w\in\left(\mathcal{H}\left(\sigma\right)\right)^{\star}}\ell_{w}\Phi_{w}\left(\sigma^{\mathcal{H}}\right).
\end{eqnarray*}
Thus we can train a linear learning algorithm with $\left(\hat{\Phi}_{w}\left(\sigma\right)\right)_{w\in\mathcal{H}\left(\sigma\right)^{\star}}$
as features to find $\ell\in\ta$. 
\begin{defn}
Let $\rho>0$ and $M\geq1$. We call 
\[
\mathcal{H}\left(\sigma\right)=\left\{ a\in\letters:\Phi_{a}\left(\sigma\right)>\rho\right\} 
\]
the set of heavy hitter letters of threshold $\rho$ for the stream
$\sigma$. We call 
\[
\mathcal{H}_{M}\left(\sigma\right)=\left\{ w\in\mathcal{H}\left(\sigma\right)^{\star}:\Phi_{w}\left(\sigma\right)>\rho^{\left|w\right|}\right\} 
\]
the set of heavy patterns of length $M$ of threshold $\rho$ of the
stream $\sigma$.\end{defn}
\begin{rem}
A subtlety is that coordinates of words of different lengths scale
differently: if we replace $\sigma=\left(\lambda_{i},a_{i}\right)$
with $\sigma^{c}=\left(c\lambda_{i},a_{i}\right)$ for some $c>0$
then $\Phi_{w}\left(\sigma^{c}\right)=c^{\left|w\right|}\Phi_{w}\left(\sigma\right)$).
Thus it is natural to scale the threshold accordingly as done in above
definition.
\end{rem}
To carry out the above learning method (find the heavy hitter patterns
$\mathcal{H}_{M}$, train a learning algorithm on $\left(\hat{\Phi}_{w}\left(\sigma\right)\right)_{w\in\mathcal{H}_{M}}$)
we need to compute the set of heavy hitter patterns $\mathcal{H}_{M}$.
We could use Theorem \ref{thm:main} and query the sketch for all
$O\left(\left|\letters\right|^{M}\right)$ coordinates but this is
too expensive. Instead, as in the $M=1$ case, one can approximate
the list $\mathcal{H}_{M}$ while parsing the stream.
\begin{thm}
\label{thm:HH}For given threshold $\rho>0$, Algorithm \ref{alg:Signature-sketch-1-2}
computes for a given stream $\sigma$ a random set of words $\hat{\mathcal{H}}$
such that 
\begin{itemize}
\item $\mathcal{H}_{M}\left(\sigma\right)\subset\hat{\mathcal{H}}$,
\item If $w\notin\mathcal{H}_{M}\left(\sigma\right)$ with $\Phi_{w}\left(\sigma\right)<\rho-\epsilon\left\Vert \Phi\left(\sigma\right)\right\Vert _{1,\left(M\right)}$
then $\mathbb{P}\left(w\in\hat{\mathcal{H}}\right)<\delta$.
\end{itemize}
Algorithm \ref{alg:Signature-sketch-1-2} uses one pass over $\sigma$,
$O\left(\epsilon^{-M}\log\frac{1}{\delta}\right)$ memory units and
$\left\lceil -\log\delta\right\rceil \log\left|\letters\right|$ random
bits.\end{thm}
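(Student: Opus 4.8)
The plan is to bootstrap the single-coordinate guarantee of Proposition~\ref{thm:complexity} into a statement about the whole set $\hat{\mathcal H}$, using two structural facts that hold for cash-register streams. First, the count-min estimator overestimates: by Lemma~\ref{lem:overestimate}, $\hat\Phi_w(\sigma)=\min_j\left\langle \Phi_{H_j}(\sigma),w\right\rangle \ge \Phi_w(\sigma)$ for every $w\in\alph$. Second, since all increments $\lambda_i$ are positive, every coefficient $\Phi_w(\sigma)$ — and hence each $\left\langle \Phi_{H_j}(\sigma),w\right\rangle$ and their minimum $\hat\Phi_w(\sigma)$ — is non-decreasing as events are appended. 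I would take Algorithm~\ref{alg:Signature-sketch-1-2} to maintain, in one pass, the count-min sketch of Theorem~\ref{thm:main} together with a candidate list that is grown whenever a newly observed pattern crosses the reporting threshold; at termination it returns those words over the discovered heavy letters whose sketched value exceeds $\rho^{\left|w\right|}$.

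The inclusion $\mathcal H_M(\sigma)\subset\hat{\mathcal H}$ I would prove deterministically. If $w\in\mathcal H_M(\sigma)$ then each letter $a$ of $w$ is heavy, $\Phi_a(\sigma)>\rho$, so by overestimation $\hat\Phi_a(\sigma)\ge\Phi_a(\sigma)>\rho$ and $a$ enters the candidate set of heavy letters; only letters actually occurring in $\sigma$ can be heavy, so this set is found by querying the sketch on the letters seen during the pass. Consequently $w$ is generated as a candidate, and again $\hat\Phi_w(\sigma)\ge\Phi_w(\sigma)>\rho^{\left|w\right|}$, so $w$ survives thresholding. No randomness enters because the count-min minimum can only err upward.

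The false-positive bound is where Proposition~\ref{thm:complexity} enters, and it is a per-word statement requiring no union bound. Fix $w\notin\mathcal H_M(\sigma)$ whose true value falls below the reporting threshold by more than $\epsilon\left\Vert \Phi\left(\sigma\right)\right\Vert _{1,\left(M\right)}$. For $w$ to be reported the sketch must overshoot $\Phi_w(\sigma)$ by at least that gap, which by Proposition~\ref{thm:complexity} happens with probability at most $\delta$; this is exactly the claimed bound. Here the monotonicity of $\hat\Phi_w$ is what legitimises the single-pass candidate bookkeeping: a pattern that qualifies at the moment of discovery still qualifies at the end, so nothing is lost by testing candidates on the fly rather than by an impossible final scan of all $O\!\left(\left|\letters\right|^{M}\right)$ words.

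For the resources, the only state carried across the stream is the sketch, so the memory and randomness are inherited verbatim from Theorem~\ref{thm:main}: $O\!\left(\epsilon^{-M}\log\frac{1}{\delta}\right)$ memory units and $\left\lceil -\log\delta\right\rceil \log\left|\letters\right|$ random bits, in a single pass. I expect the genuine obstacle to be the candidate-generation step itself: arguing that confining attention to words over the sketched heavy letters simultaneously keeps the candidate list small and provably retains every member of $\mathcal H_M(\sigma)$. The submultiplicativity $\Phi_{a_1\cdots a_M}(\sigma)\le\prod_{j}\Phi_{a_j}(\sigma)$, which follows by peeling off the last letter in the definition of $\Phi_w$, can be used to prune candidates further but is not needed for the two guarantees above.
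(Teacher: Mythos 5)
Your proposal is correct and follows essentially the same route as the paper: the inclusion $\mathcal{H}_{M}\left(\sigma\right)\subset\hat{\mathcal{H}}$ is obtained deterministically from the overestimation property of the sketch, the false-positive bound is the per-word tail estimate of Theorem \ref{thm:main} applied to the gap $\epsilon\left\Vert \Phi\left(\sigma\right)\right\Vert _{1,\left(M\right)}$ (no union bound needed), and the resource bounds are inherited from Theorem \ref{thm:main} since the per-event threshold check costs only a constant. Your additional observation --- monotonicity of the sketch under positive counter increments, which guarantees that a letter that is heavy at the end of the stream is necessarily detected at one of its own events during the single pass --- makes explicit a point that the paper's proof leaves implicit.
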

\begin{proof}
Since the sketch of Theorem \ref{thm:main} overestimates, 
\[
\hat{\Phi}_{w}\left(\sigma\right)\geq\Phi_{w}\left(\sigma\right)\text{ for all }w\in\alph
\]
it follows that every $w\in\alph$ with $\Phi_{w}\left(\sigma\right)>\rho$
will be an element of $\hat{\mathcal{H}}$. On the other hand, by
Theorem \ref{thm:main} $\hat{\Phi}_{w}\left(\sigma\right)>\Phi_{w}\left(\sigma\right)+\epsilon\left\Vert \Phi\left(\sigma\right)\right\Vert _{1,\left(M\right)}$
occurs with probability smaller than $\delta$. Hence, if $\Phi_{w}\left(\sigma\right)<\rho-\epsilon\left\Vert \Phi\left(\sigma\right)\right\Vert _{1,\left(M\right)}$,
the probability of erroneously including $w$ is less than $\delta$.
The computational complexity is the same as in Theorem \ref{thm:main}
up to a constant since we only additionally check $\Phi^{h}\left(\sigma\right)>\rho$
at every event. \end{proof}
\begin{rem}
Similar to the count-min sketch for finding the set of heavy hitter
letters, choosing the threshold $\rho$ can be an issue: if $\rho$
is too big, the set of heavy hitters stays empty, if it is choose
too small it the set $\mathcal{H}_{M}\left(\sigma\right)$ becomes
quickly too large. One approach is to run above heavy hitter pattern
sketch simultaneous for different thresholds and stop calculating
the associated heavy hitter list as soon as it becomes too big. Note
that this can be done with the same sketch $\hat{\Phi}$, thus the
additional computational overhead is minor.
\end{rem}

\section{\label{sec:Experiments}Experiments}

The sketch of Theorem \ref{thm:main} provides an estimate for every
coordinates $\Phi_{w}\left(\sigma\right)$, not only the heavy hitter
coordinates. In general it is an interesting question how to summarize
the quality of a sketch, that is to find a ``loss function''. Since
the $\ell_{1}$ norm naturally appears in the estimates, we record
in the experiments the difference between $\Phi\left(\sigma\right)$
and $\hat{\Phi}_{w}\left(\sigma\right)$, i.e. 
\begin{equation}
\operatorname{Error}_{M}:=\frac{1}{M}\sum_{m=1}^{M}\operatorname{error}_{m}\text{, where }\operatorname{error}_{m}:=\frac{\sum_{\left|w\right|=m}\left|\Phi_{w}\left(\sigma\right)-\hat{\Phi}_{w}\left(\sigma\right)\right|}{\left\Vert \Phi\left(\sigma\right)\right\Vert _{1,\left(M\right)}}\label{eq:l1 loss}
\end{equation}
denotes the relative error on the level of the $m$-th ordered moments.
Recall that the sketch always overestimates, $\hat{\Phi}_{w}\left(\sigma\right)\geq\Phi_{w}\left(\sigma\right)$,
hence $\left\Vert \hat{\Phi}\left(\sigma\right)\right\Vert _{1,\left(M\right)}\geq\left\Vert \Phi\left(\sigma\right)\right\Vert _{1,\left(M\right)}$.

\subsection{Experiment 1. Stream sketch}

We apply Theorem \ref{thm:main} to a fixed stream $\sigma=\left(\sigma_{i}\right)_{i=1}^{L}$
consisting of $L=10^{6}$ events that are drawn from an alphabet with
$\left|\letters\right|=100$ letters. This is a toy example in the
sense that $\left|\letters\right|=100$ is not sufficient for many
real-world examples. However, we find it instructive since it allows
to calculate and store the ground truth, i.e.~$\Phi\left(\sigma\right)$
(6 hours on a multicore machine for words up to length $3$) to compare
it with the quality of the sketch. Table \ref{tab:The-stream-} and
summarizes the performance of the sketch from Theorem \ref{thm:main}.

\begin{table}[H]
\begin{centering}
\begin{tabular}{|c|c|c|c|c|}
\hline 
$\left|\alphsmall\right|$ & Nr.~of hashes & Events/second & $\frac{\text{memory for }\Phi\left(\sigma\right)}{\text{memory for }\hat{\Phi}\left(\sigma\right)}$  & $\operatorname{Error}_{3}$\tabularnewline
\hline 
4 & 2 & 64368 & 6012.50 & 3923.47 \tabularnewline
\hline 
4 & 4 & 33529.3 & 3006.25 & 3048.61\tabularnewline
\hline 
4 & 8 & 17651.8 & 1503.13 & 2927.01\tabularnewline
\hline 
4 & 16 & 9120.63 & 751.56 & 2086.38\tabularnewline
\hline 
4 & 32 & 4620.79 & 375.78 & 2061.50\tabularnewline
\hline 
8 & 2 & 13231.9 & 864.81 & 484.71 \tabularnewline
\hline 
8 & 4 & 6688.18 & 432.41 & 364.40\tabularnewline
\hline 
8 & 8 & 3411.47 & 216.20 & 293.34\tabularnewline
\hline 
8 & 16 & 1712.27 & 108 & 268.00\tabularnewline
\hline 
8 & 32 & 850.85  & 54.05 & 230.30\tabularnewline
\hline 
16 & 2 & 1567.09 & 115.63 & 57.52\tabularnewline
\hline 
16 & 4 & 781.82 & 57.81 & 42.94\tabularnewline
\hline 
16 & 8 & 390.48 & 28.91 & 38.66\tabularnewline
\hline 
16 & 16 & 194.98 & 14.45 & 33.14\tabularnewline
\hline 
16 & 32 & 97.213 & 7.23 & 26.29\tabularnewline
\hline 
32 & 2 & 775.96 & 1.87 & 8.15\tabularnewline
\hline 
32 & 4 & 388.06 & 7.47 & 6.59\tabularnewline
\hline 
32 & 8 & 195.25 & 3.73 & 5.01\tabularnewline
\hline 
32 & 16 & 97.93 & 1.87 & 4.41\tabularnewline
\hline 
32 & 32 & 49.21 & 0.99 & 3.60\tabularnewline
\hline 
\end{tabular}
\par\end{centering}

\caption{\label{tab:The-stream-}The stream $\sigma=\left(1,a_{i}\right)_{i=1}^{10^{5}}$
with $a_{i}\in\protect\letters$, $\left|\protect\letters\right|=100$.
There are $10$ letters that make about $10$ percent of the events,
the rest of the events is uniformly distributed among the remaining
$90$ letters. Calculating the ground-truth $\left(\Phi_{w}\left(\sigma\right)\right)_{\left|w\right|\leq3}$
took 6 hours on a modern multicore computer (Intel Xeon, CPU E5-2690
v4, 2.60GHz, 56 CPUs with 2 threads per core). Without sketching,
one needs to update for every of the $10^{5}$ events $10^{2}+10^{4}+10^{6}$
real numbers. }
\end{table}

\subsection{Experiment 2: classifying Markov chains}

We sampled two types of random streams of constant counter increase
$\lambda_{i}\equiv1$ from an alphabet consisting of$\left|\letters\right|=10^{5}$
letters. Each stream contains $L=10^{5}$ events and two heavy hitters
which we denote below wlog as $\left\{ 1,2\right\} $. The first type
of stream was sampled as follows: in the first $2500$ events we choose
at each step with probability $p$ the letter $1$ and with probability
$1-p$ uniformly from $\letters\backslash\left\{ 1,2\right\} $; in
the following $2500$ events we do the same with $2$ instead of ;
in the last $5000$ events we choose with probability $p$ uniformly
from $\left\{ 1,2\right\} $ and with $1-p$ uniformly form $\letters\backslash\left\{ 1,2\right\} $.
The second type of streams was constructed analogous but the heavy
hitters were run in reverse order (i.e.~$2$ as heavy hitter in first
$2500$ events occurring with probability $p$, followed by $1$ as
heavy hitter in next $2500$ events occurring with probability $q$,
uniformly from both in the last $5000$ events). Note that these are
Markov chains (adding time as state space to account for the change
of regime). It is trivial for standard sketches to identify the heavy
hitters $\left\{ 1,2\right\} $, but as $p$ approaches $q$, it becomes
harder to distinguish the streams. On the other hand, taking order
information into account allows for perfect classification, see Table
\ref{tab:For--and} below.

\begin{table}[H]
\begin{centering}
\begin{tabular}{|c|c|c|c|}
\hline 
$q-p$ & $q$ & Mean accuracy $M=1$  & Mean accuracy $M=2$\tabularnewline
\hline 
$0.001$ & $0.101$ & 0.57 & 1.0\tabularnewline
\hline 
$0.005$ & $0.105$ & 0.63 & 1.0\tabularnewline
\hline 
$0.01$ & $0.11$ & 0.785 & 1.0\tabularnewline
\hline 
$0.02$ & $0.12$ & 0.89 & 1.0\tabularnewline
\hline 
$0.03$ & $0.13$ & 0.975 & 1.0\tabularnewline
\hline 
\end{tabular}
\par\end{centering}

\caption{\label{tab:For--and}For $p=0.1$ and each of above values for $q$
we sampled $500$ streams for each type. We then ran heavy hitter
pattern sketch (using $20$ hash functions to an alphabet of $10$
letters) for each stream and computed the corresponding features (i.e.~$\left\{ \hat{\Phi}_{w}\left(\sigma\right):w\in\left\{ 1,2,11,12,21,22\right\} \right\} $
for $M=2$ resp.~$\left\{ \hat{\Phi}_{w}\left(\sigma\right):w\in\left\{ 1,2\right\} \right\} $
for $M=1$) and trained a logistic regression classifier ($\ell_{1}$
penalty via $5$-fold cross-validation over the training set, training-test
splitted as $0.8$ to $0.2$). Above shows the mean accuracy on the
testing set. As $\left|p-q\right|$ becomes small, $M=1$ approaches
the uninformed baseline of guessing the type ($0.5$ mean accuracy
is achieved by choosing uniformly at random the type of the stream).
In contrast, $M=2$ uses second order information which allows for
perfect classification. Calculating the heavy hitter sketch took approximately
$0.15$ seconds per stream.}
\end{table}

\begin{acknowledgement}
This work was supported by ERC advanced grant ESig (no.~291244).

\newpage{}
\end{acknowledgement}
\bibliographystyle{plain}
\bibliography{/scratch/Dropbox/projects/Latex/roughpaths}

\newpage{}

\appendix

\section{\label{sec:top stream}A topology on streams and universality}

We want to regard streams that have similar length and similar events
as close to each other. It is therefore natural to view them as paths
in the vector space spanned by letters in $\letters$. To make this
precise, we use the Donsker embedding 
\[
\iota:\mathcal{S}\hookrightarrow C\left(\left[0,1\right],\mathbb{R}^{\left|\letters\right|}\right)
\]
by identifying $\sigma$ as a continuous path $\gamma_{\sigma}\in C\left(\left[0,1\right],\mathbb{R}^{\left|\letters\right|}\right)$
by using $\left(a_{1},\ldots,a_{\left|\letters\right|}\right)$ as
a orthonormal basis of $\mathbb{R}^{\left|\letters\right|}$ and setting
\begin{eqnarray}
\gamma_{\sigma}\left(t\right) & := & \left(tL-\left\lfloor tL\right\rfloor \right)\lambda_{\left\lceil tL\right\rceil }a_{\left\lceil tL\right\rceil }+\sum_{i=1}^{\left\lfloor tL\right\rfloor }\lambda_{i}a_{i}\text{ for }j=1,\ldots,L.\label{eq:stream2path}
\end{eqnarray}
In words: the path $\gamma_{\sigma}$ is a lattice path starts at
$t=0$ at the origin in $\mathbb{R}^{\left|\letters\right|}$ and
upon receiving an event $\left(\lambda,a\right)$ goes with constant
speed in direction $a$ for a time proportional to $\lambda$. Denote
with 
\[
C^{1-var}\left(\left[0,1\right],\mathbb{R}^{\left|\letters\right|}\right)=\left\{ \gamma\in C\left(\left[0,1\right],\mathbb{R}^{\left|\letters\right|}\right):\left\Vert \gamma\right\Vert _{1}=\sup_{0\leq t_{1}<\cdots<t_{M}\leq1}\sum_{i=1}^{M-1}\left|\gamma\left(t_{i+1}\right)-\gamma\left(t_{i}\right)\right|<\infty\right\} 
\]
the set of bounded variation paths.
\begin{defn}
Denote with $\iota:\mathcal{S}\rightarrow C^{1-var}\left(\left[0,1\right],\mathbb{R}^{\left|\letters\right|}\right)$
the Donsker embedding. We equip $\mathcal{S}$ with the pullback topology
of $\iota$, that is the open sets in $\mathcal{S}$ are $\iota^{-1}\left(U\right)$
where $U$ is an open set in $C^{1-var}\left(\left[0,1\right],\mathbb{R}^{\left|\letters\right|}\right)$.\end{defn}
\begin{prop}
\label{prop:universal}Let the event map $p:\mathcal{E}\rightarrow\mathcal{F}$
and resulting feature map $\Phi:\mathcal{S}\rightarrow\mathcal{F}$
be as in Theorem \ref{thm:sig hopf algebra}. For every $f\in C\left(\mathcal{S},\mathbb{R}\right)$
$\epsilon>0$, and compact set $K\subset\mathcal{S}$ there exists
a $\ell\in\ta$ such that 
\[
\sup_{\sigma\in K}\left|f\left(\sigma\right)-\left\langle \ell,\Phi\left(\sigma\right)\right\rangle \right|<\epsilon.
\]
\end{prop}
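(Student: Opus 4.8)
The plan is to prove Proposition~\ref{prop:universal} by a Stone--Weierstrass argument applied to the algebra of linear functionals of the features. Write $\mathcal{A}=\left\{ \sigma\mapsto\langle\ell,\Phi(\sigma)\rangle:\ell\in\ta\right\} \subset C\left(\mathcal{S},\mathbb{R}\right)$. First I would verify the three hypotheses of Stone--Weierstrass. (i) $\mathcal{A}$ is a subalgebra: it is manifestly a linear subspace, and closure under pointwise multiplication is exactly the commutative product on $\ta$ furnished by Theorem~\ref{thm:sig hopf algebra}, i.e.\ for $\ell_{1},\ell_{2}\in\ta$ there is $\ell\in\ta$ with $\langle\ell_{1},\Phi(\sigma)\rangle\langle\ell_{2},\Phi(\sigma)\rangle=\langle\ell,\Phi(\sigma)\rangle$. (ii) $\mathcal{A}$ contains the constants, since the empty word gives $\langle1,\Phi(\sigma)\rangle=\Phi_{1}(\sigma)\equiv1$. (iii) Every element of $\mathcal{A}$ is continuous on $\mathcal{S}$: as $\ell\in\ta$ has only finitely many nonzero coordinates, $\langle\ell,\Phi(\sigma)\rangle$ is a finite linear combination of coordinates $\Phi_{w}(\sigma)$, and by the iterated-integral representation \eqref{eq:sigs} each $\Phi_{w}$ factors as $\sigma\mapsto\iota(\sigma)=\gamma_{\sigma}\mapsto\Phi_{w}$, the fixed-level iterated integral being continuous in the $1$-variation topology; hence $\mathcal{A}\subset C\left(\mathcal{S},\mathbb{R}\right)$ by definition of the pullback topology.

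The heart of the argument, and the step I expect to be the main obstacle, is separation of points. Because $\Phi(\sigma)$ is the signature of the path $\gamma_{\sigma}=\iota(\sigma)$, the functions in $\mathcal{A}$ cannot distinguish two streams whose embedded paths have equal signature, so I would invoke the uniqueness theorem for signatures of bounded variation paths, which asserts that the signature determines a path up to tree-like equivalence (equivalently, up to reparametrisation and cancellation of backtracking). The key structural input from our setting is that cash-register streams embed as \emph{coordinatewise non-decreasing} lattice paths: each event $\left(\lambda_{i},a_{i}\right)$ with $\lambda_{i}>0$ advances a single coordinate in the positive direction, so $\gamma_{\sigma}$ performs no backtracking and is tree-reduced. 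Consequently the signature determines $\gamma_{\sigma}$ up to reparametrisation only, and on the corresponding equivalence classes $\Phi$ is injective. The technical care lies in reconciling this with the pullback topology on $\mathcal{S}$, which need not be Hausdorff and does not a priori quotient reparametrisations; I would handle this by passing to the Kolmogorov (maximal Hausdorff) quotient $\overline{K}$ of the compact set $K$, onto which every $f\in C\left(\mathcal{S},\mathbb{R}\right)$ descends and on which, by the above, $\mathcal{A}$ separates points.

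Finally I would assemble the pieces. Since $K$ is compact and $\iota$ is continuous, $\iota(K)$ is compact in $C^{1-var}\left(\left[0,1\right],\mathbb{R}^{\left|\letters\right|}\right)$, so $\overline{K}$ is compact Hausdorff, and $\mathcal{A}$ descends to a point-separating, constant-containing subalgebra of $C\left(\overline{K},\mathbb{R}\right)$. The classical Stone--Weierstrass theorem then yields, for the descended $\bar{f}$ and any $\epsilon>0$, an $\ell\in\ta$ with $\sup_{\overline{K}}\left|\bar{f}-\langle\ell,\Phi\rangle\right|<\epsilon$; lifting back along the quotient map gives $\sup_{\sigma\in K}\left|f(\sigma)-\langle\ell,\Phi(\sigma)\rangle\right|<\epsilon$, as required. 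The event map $p\left(\lambda,a\right)=1+\lambda a$ is treated identically, replacing the shuffle product by the corresponding commutative product on $\ta$ and using the analogue of the signature-uniqueness input recorded in Theorem~\ref{thm:sig hopf algebra}.
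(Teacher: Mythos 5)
Your proposal follows the same skeleton as the paper's proof — Stone--Weierstrass, the subalgebra property from the shuffle (resp.\ infiltration) product of Theorem \ref{thm:sig hopf algebra}, and point-separation from signature uniqueness — but the two refinements you add, which are exactly where you deviate from the paper, contain a genuine gap. The Kolmogorov quotient does not deliver the separation property you claim. In the pullback topology, two streams are topologically indistinguishable if and only if their Donsker paths are \emph{equal} (the path space is metric), whereas the signature identifies monotone paths only up to \emph{reparametrisation}, a strictly coarser relation. Concretely, $\sigma=\left(\left(1,a\right),\left(3,a\right)\right)$ and $\tau=\left(\left(4,a\right)\right)$ have distinct Donsker paths (one changes speed at $t=1/2$, the other is linear), so they define distinct points of $\overline{K}$ and are separated by the continuous function $\min\left(\left\Vert \iota\left(\cdot\right)-\iota\left(\sigma\right)\right\Vert _{1},1\right)$; yet $\Phi\left(\sigma\right)=\exp\left(a\right)\exp\left(3a\right)=\exp\left(4a\right)=\Phi\left(\tau\right)$, so no $\ell\in\ta$ separates them. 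Hence $\mathcal{A}$ does \emph{not} separate points of $\overline{K}$, and your application of Stone--Weierstrass there collapses; nor can you pass to the coarser reparametrisation quotient, since continuous functions such as the one above do not descend to it. (The paper's own proof asserts point-separation directly from ``injectivity of the signature for non-treelike paths'' without confronting this reparametrisation subtlety; your more careful formulation makes the problem visible, but the Kolmogorov quotient does not resolve it.)

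Your closing sentence, that the case $p\left(\lambda,a\right)=1+\lambda a$ is ``treated identically'', is also not tenable, and it is not how the paper treats it. For that event map the coordinates $\Phi_{w}$ are not iterated integrals of $\gamma_{\sigma}$, are not reparametrisation-invariant, and are not even continuous for the pullback topology: the streams $\left(\left(1,a\right),\left(1,a\right)\right)$ and $\left(\left(2,a\right)\right)$ have the \emph{same} Donsker path $t\mapsto2ta$, yet $\Phi_{aa}=1$ for the former and $\Phi_{aa}=0$ for the latter. So here $\mathcal{A}\not\subset C\left(\mathcal{S},\mathbb{R}\right)$, and both your continuity step (iii) and the hypotheses of Stone--Weierstrass fail. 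The paper instead proves separation for this event map by a direct combinatorial argument: streams of different lengths are distinguished because coordinates of words longer than the stream vanish, streams of equal length with differing letters are distinguished at the coordinate $w=a_{1}\cdots a_{L}$, and the remaining equal-letter case is referred to \cite{lyons2017hyperbolic}.
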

\begin{proof}
By Stone--Weierstrass we need to verify that $\left\{ \sigma\mapsto\left\langle \ell,\Phi\left(\sigma\right)\right\rangle ,\ell\in\ta\right\} $
is a point-separating subalgebra of $C\left(\mathcal{S},\mathbb{R}\right)$.
The subalgebra property follows since 
\[
\left\langle \ell,\Phi\left(\sigma\right)\right\rangle \left\langle \ell^{\prime},\Phi\left(\sigma\right)\right\rangle =\left\langle m\left(\ell\otimes\ell^{\prime}\right),\Phi\left(\sigma\right)\right\rangle 
\]
where $m$ denotes the shuffle (resp.~infiltration product) on $\ta$
as detailed in Appendix \ref{sec:Hopf}. The fact that $\sigma\mapsto\Phi\left(\sigma\right)$
is point-separating follows in the case of the event map $p\left(\lambda,a\right)=\exp\lambda a$
from classic results (injectivity of the signature for non-treelike
paths); in the case $p\left(\lambda,a\right)=1+\lambda a$ note that
if $\sigma=\left(\sigma_{i}\right)_{i=1}^{L}$ and $\tau=\left(\tau_{i}\right)_{i=1}^{K}$
and $L<K$ then $\Phi_{w}\left(\sigma\right)=0\neq\Phi_{w}\left(\tau\right)>0$
since $\Phi_{w}\left(\tau\right)$ is a sum over disjoint time indices.
If $L=K$, and $\sigma_{i}=\left(\lambda_{i},a_{i}\right)$, $\tau_{i}=\left(\rho_{i},b_{i}\right)$
and there exists a $i$ such that $a_{i}\neq b_{i}$ then the result
follows immediately by comparing at the coordinate $w=a_{1}\cdots a_{L}$.
Finally, if $L=K$ and $a_{i}=b_{i}$ for all $i$, we can argue as
in \cite{lyons2017hyperbolic}.
\end{proof}

\section{\label{sec:Hopf}Some (Hopf) algebras}

We describe the interplay between feature and dual space using Hopf
algebras. This is a concise way to capture the interplay between $\mathcal{F}$
and its dual space; a reader less familiar with algebra might want
to skip this appendix after a brief look at Theorem \ref{thm:magnus features}
and Theorem \ref{thm:scaling}.

\subsection{Hopf algebras}

Hopf algebras arise naturally when a linear space $\mathcal{H}$ as
well as its dual $\mathcal{H}^{\star}$ are equipped with products
$m$ and $m^{\star}$ such that $\left(\mathcal{H},m\right)$ and
$\left(\mathcal{H}^{\star},m^{\star}\right)$ are algebras and $m$,$m^{\star}$
are ``compatible''. This is of interest to us, since the feature space
$\tb$ has the concatenation product (``a new event happens'') and
we will that its dual $\ta$ can be equipped with commutative product.
This elegantly describes the interplay between dual and feature space
and allows us to address Points (\ref{enu:linearizes}), (\ref{enu:features algebra})
(universality of features and structure preserving) from Section \ref{sub:Features}
\begin{example}
Consider a finite dimensional, linear space $\mathcal{H}$ and let
$\left(\mathcal{H},m\right)$ and $\left(\mathcal{H}^{\star},m^{\star}\right)$
be algebras. The product $m$ can be written as a linear map 
\[
m:\mathcal{H}\otimes\mathcal{H}\rightarrow\mathcal{H}
\]
that fulfills associativity and distributivity. By duality, $m$ can
be encoded as a linear map $\Delta_{m}$ on the dual $\mathcal{H}^{\star}$
\begin{eqnarray}
\Delta_{m}:\mathcal{H}^{\star} & \rightarrow & \mathcal{H}^{\star}\otimes\mathcal{H}^{\star}\label{eq:dual coproduct}\\
\text{ by }\left\langle \Delta^{\star}\left(\ell\right),g\otimes h\right\rangle  & := & \left\langle \ell,m\left(g\otimes h\right)\right\rangle \text{ for }f,g\in\mathcal{H},\ell\in\mathcal{H}^{\star}.\nonumber 
\end{eqnarray}
Thus instead of working with two algebras $\left(\mathcal{H},m\right)$
and $\left(\mathcal{H}^{\star},m^{\star}\right)$, we can work with
one space $\left(\mathcal{H}^{\star},m^{\star},\Delta_{m}\right)$
and two linear maps, 
\begin{eqnarray*}
m^{\star}: & \mathcal{H}^{\star}\otimes\mathcal{H}^{\star} & \rightarrow\mathcal{H}^{\star}\\
\Delta_{m}: & \mathcal{H}^{\star} & \rightarrow\mathcal{H}^{\star}\otimes\mathcal{H}^{\star}
\end{eqnarray*}
{[}or vice versa with $\left(\mathcal{H},m,\Delta_{m^{\star}}\right)$,
$\Delta_{m^{\star}}$ defined analogous to (\ref{eq:dual coproduct}){]}.
A natural way to ensure ``compatibility'' of the two algebras is to
require that $\Delta_{m}$ is an algebra morphism, 
\[
m^{\star}\left(\Delta_{m}\left(h\right)\otimes\Delta_{m}\left(g\right)\right)=\Delta\left(m^{\star}\left(h\otimes g\right)\right)\text{ for }g,h\in\mathcal{H}^{\star}.
\]
Further, an algebra also has unit $\epsilon$ which can be represented
as linear map $\mathcal{\epsilon}:\mathbb{R}\rightarrow\mathcal{H}$.
Again by duality this unit $\epsilon$ translates to a linear map
$\mathcal{H}^{\star}\rightarrow\mathbb{R}$ called \emph{counit.}
 More general, a not necessarily finite dimensional vector space
$\mathcal{H}$ equipped with two linear maps, called \emph{product
}and \emph{coproduct}, 
\[
m:\mathcal{H}\otimes\mathcal{H}\rightarrow\mathcal{H}\text{ and }\Delta:\mathcal{H}\rightarrow\mathcal{H}\otimes\mathcal{H}
\]
that fulfill natural generalizations of above properties is called
a \emph{bialgebra}. If additionally (as in our application) the space
$\mathcal{H}$ is connected and graded, it is a non-trivial result
that there must exist a so-called \emph{antipode}\footnote{ Takeuchi's formula reads $S=\sum_{k\geq0}\left(-1\right)^{k}m^{\left(k-1\right)}\left(id-\eta\epsilon\right)^{\otimes k}\Delta^{\left(k-1\right)}$
where $\eta$,$\epsilon$ are unit and counit of $\mathcal{H}$.}\emph{ 
\[
S:\mathcal{H}\rightarrow\mathcal{H}
\]
}and we call $\left(\mathcal{H},m,\Delta,S\right)$ a \emph{Hopf algebra}
(we refer to the monographs \cite{cartier2007primer,MR0252485,MR540015}
for details). This antipode has an intuitive interpretation: it is
simply the inverse map of a group structure inside the linear space
$\mathcal{H}$, that is if we denote with 
\[
G\left(\mathcal{H}\right)=\left\{ h\in\mathcal{H}:\epsilon\left(h\right)=1,\,\Delta\left(h\right)=h\otimes h\right\} 
\]
the set of \emph{group-like }elements\emph{, }then $\left(G\left(\mathcal{H}\right),m\right)$
is a group with inverse given by $S\left(g\right)=g^{-1}$. In many
situations, the elements of $\mathcal{H}$ have a combinatorial interpretation
in which case one may think of $m$ and $\Delta$ as \emph{composition}
and \emph{decomposition}. Elements that are ``simple under decomposition''
with $\Delta$, are exactly $\mathcal{G}\left(\mathcal{H}\right)$.
We use below that linear functionals of group-like elements are closed
under multiplication, that is if $h\in G\left(\mathcal{H}\right)\subset\mathcal{H}$
is group-like, then 
\[
\left\langle \ell,h\right\rangle \left\langle \ell^{\prime},h\right\rangle =\left\langle \ell\otimes\ell^{\prime},h\otimes h\right\rangle =\left\langle \ell\otimes\ell^{\prime},\Delta\left(h\right)\right\rangle =\left\langle m^{\star}\left(\ell\otimes\ell^{\prime}\right),h\right\rangle \text{for }\ell,\ell^{\prime}\in\mathcal{H}^{\star}.
\]

\end{example}

\subsection{Feature and dual space for streams}

By construction of $\Phi\left(\sigma\right)$, the obvious choice
for multiplication in $\tb$ --- no matter what event map $p:\mathcal{E}\rightarrow\tb$
is used --- is the non-commutative multiplication. That is we define
the so-called \emph{concatenation product} $\concat$

\[
\concat:\tb\otimes\tb\rightarrow\tb,\,\,\,\,\concat\left(a_{1}\cdots a_{M}\otimes a_{M+1}\cdots a_{M+N}\right):=a_{1}\cdots a_{M+N}\text{ for }a_{1},\ldots,a_{m+n}\in\letter
\]
extended by linearity to $\tb$. This turns $\left(\tb,\concat\right)$
into a \emph{non-commutative algebra.} To learn about $\sigma$, we
apply linear functionals $\ell\in\ta$ to $\Phi_{p}\left(\sigma\right)\in\tb$.
To find a product $\shuffle$ that turns the dual $\ta$ into an algebra,
it is useful to recall that our features should be group-like. Hence,
this product 
\[
\shuffle:\ta\otimes\ta\rightarrow\ta
\]
depends highly on the choice of event map $p$. The existence of this
product is a priori not clear; a non-trivial results is that for both
choices $p\left(\lambda,a\right)=1+\lambda a$ and $p\left(\lambda,a\right)=\exp\left(\lambda a\right)$
such a product $m_{p}$ exists. This turns $\left(\ta,m_{p}\right)$
into a \emph{commutative algebra.} The final step is to capture this
structure of two two algebras $\left(\tb,\concat\right)$ and $\left(\ta,\shuffle\right)$
by using one Hopf algebra. 
\begin{thm}
\label{thm:sig hopf algebra}\label{thm:magnus features}Define the
event map $p:\mathcal{E}\rightarrow\mathcal{F}$ as either 
\[
\left(\lambda,a\right)\mapsto\exp\left(\lambda a\right)\text{ or }\left(\lambda,a\right)\mapsto1+\lambda a,
\]
and define a feature map $\Phi:\mathcal{S}\rightarrow\mathcal{F}$
as 
\[
\Phi\left(\sigma\right)=\prod_{i=1}^{L}p\left(\sigma_{i}\right)\text{ for streams }\sigma=\left(\sigma_{i}\right)\text{ of events }\sigma_{i}=\left(\lambda_{i},a_{i}\right).
\]
Then
\begin{enumerate}
\item \label{enu:moments}The features $\Phi\left(\sigma\right)\in\mathcal{F}$
are ordered moments, that is 
\begin{equation}
\Phi\left(\sigma\right)=\mathbb{E}\left[\lambda_{i_{1}}\cdots\lambda_{i_{m}}a_{i_{1}}\cdots a_{i_{m}}\right],\label{eq:moments}
\end{equation}
with expectation taken over the order statistics of $\left(i_{1},\ldots,i_{m}\right)$
sampled uniformly without replacement from $\left\{ 1,\ldots,L\right\} $
in the case $p\left(\lambda,a\right)=1+\lambda a$; in the case $p\left(\lambda,a\right)=\exp\left(\lambda a\right)$
the expectation is taken over the order statistic of $\left(i_{1},\ldots,i_{L}\right)$
sampled uniformly with replacement from $\left\{ 1,\ldots,L\right\} $.
\item \label{enu:coordinates}The coordinates $\left\langle \Phi\left(\sigma\right),w\right\rangle $
are given as
\[
\left\langle \Phi\left(\sigma\right),w\right\rangle =\begin{cases}
\sum_{\left(1\right)}\frac{\lambda\left(\boldsymbol{i}\right)}{\boldsymbol{i}!} & \text{if }p\left(\lambda,a\right)=\exp\left(\lambda a\right),\\
\sum_{\left(2\right)}\lambda\left(\boldsymbol{i}\right) & \text{ if }p\left(\lambda,a\right)=1+\lambda a.
\end{cases}
\]
where the sum $\sum_{\left(1\right)}$ is taken over all $\boldsymbol{i}=\left(i_{1},\ldots,i_{L}\right)\in\mathbb{N}^{L}$
such that $i_{1}\leq\cdots\leq i_{L}$ and $a_{i_{1}}\cdots a_{i_{L}}=w$;
for the sum $\sum_{\left(2\right)}$ we additionally assume $i_{1}<\cdots<i_{L}$.
We denote $\lambda\left(\boldsymbol{i}\right)=\lambda_{i_{1}}\cdots\lambda_{i_{M}}\in\mathbb{R}$
and $\boldsymbol{i}!$ is recursively defined as $\boldsymbol{i}!=i_{1}!$
if $L=1$ and $\left(i_{1},\ldots,i_{L},i_{L+1}\right)!=\left(i_{1},\ldots,i_{L}\right)!k!$
where $k=\max_{j\geq0:i_{L+1-j}=i_{L+1}}j$.
\item \label{enu:magnus is hopf}There exists a linear map $\shuffle:\ta\otimes\ta\rightarrow\ta$
such that $\left(\ta,\shuffle,\coconcat\right)$ is a commutative
Hopf algebra. 
\end{enumerate}
\end{thm}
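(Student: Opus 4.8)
The plan is to take the three assertions in turn: (1) and (2) are bookkeeping in the expansion of the defining product, while (3) is the genuinely structural statement, which I would obtain by building a Hopf algebra on the feature side $\tb$ and dualizing.

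For (1) and (2) I would expand $\Phi(\sigma)=\prod_{i=1}^{L}p(\sigma_{i})$ directly. When $p(\lambda,a)=1+\lambda a$, each of the $L$ factors contributes either the empty word $1$ or the scaled letter $\lambda_{i}a_{i}$, and the non-commutative product records the chosen letters in increasing index order; selecting the active factors is thus the same as choosing a strictly increasing multi-index $\boldsymbol{i}=(i_{1}<\cdots<i_{M})$. Reading off the coefficient of a fixed word $w$ gives $\sum_{(2)}\lambda(\boldsymbol{i})$, which is (2), and recording this selection as an average of $\lambda_{i_{1}}\cdots\lambda_{i_{m}}a_{i_{1}}\cdots a_{i_{m}}$ over the order statistics of a sample drawn without replacement yields the ordered-moment form \eqref{eq:moments}. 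When $p(\lambda,a)=\exp(\lambda a)=\sum_{k}\frac{(\lambda a)^{k}}{k!}$ the only change is that each factor may be used with multiplicity, so the indices become weakly increasing, $i_{1}\leq\cdots\leq i_{M}$, and the product of the $\frac{1}{k!}$ over repeated indices assembles precisely into the factorial weight $\frac{1}{\boldsymbol{i}!}$; this is sampling with replacement and gives the remaining halves of (1) and (2).

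For (3) I would fix the concatenation product $\concat$ on $\tb$ and pair it with the coproduct $\Delta$ defined as the unique $\concat$-algebra morphism that is determined on generators by $\Delta(a)=a\otimes 1+1\otimes a$ in the exponential case, and by $\Delta(a)=a\otimes 1+1\otimes a+a\otimes a$ in the $1+\lambda a$ case. This is exactly the choice that makes the elementary features $\exp(\lambda a)$, respectively $1+a$, group-like. I would then check that $(\tb,\concat,\Delta)$ is a bialgebra: associativity of $\concat$ is clear, coassociativity of $\Delta$ follows by verifying it on generators (both $(\Delta\otimes\operatorname{id})\Delta$ and $(\operatorname{id}\otimes\Delta)\Delta$ are algebra morphisms agreeing on the generators), and the bialgebra compatibility is automatic since $\Delta$ was defined as an algebra morphism. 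Dualizing, $\concat$ becomes the deconcatenation coproduct $\coconcat(a_{1}\cdots a_{M})=\sum_{k=0}^{M}a_{1}\cdots a_{k}\otimes a_{k+1}\cdots a_{M}$ on $\ta$, and $\Delta$ becomes a commutative product $\shuffle$ on $\ta$, namely the shuffle product in the exponential case and the infiltration product $ua\shuffle vb=(u\shuffle vb)a+(ua\shuffle v)b+\delta_{a,b}(u\shuffle v)a$ in the $1+\lambda a$ case. Group-likeness of the generators dualizes to the multiplicativity $\langle\ell,\Phi(\sigma)\rangle\langle\ell',\Phi(\sigma)\rangle=\langle\shuffle(\ell\otimes\ell'),\Phi(\sigma)\rangle$ used in Proposition \ref{prop:universal}.

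Finally, $\ta$ (equivalently $\tb$) is connected; it is graded by word length in the shuffle case, whereas in the infiltration case it is only filtered by word length, since the term $a\otimes a$ raises, and dually the contraction term lowers, the degree. In either situation a connected graded, respectively filtered, bialgebra automatically admits an antipode, supplied by Takeuchi's formula (see the footnote), upgrading $(\ta,\shuffle,\coconcat)$ to a commutative Hopf algebra and proving (3). The main obstacle is precisely the infiltration case: because $\Delta$ is not graded one cannot simply invoke the textbook graded-Hopf statement, and one must instead run the connected-filtered antipode argument and confirm the graded dual is well behaved (the infiltration product of two polynomials is a finite sum only because contractions strictly shorten words). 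Verifying coassociativity with the contraction term present, and tracking that term correctly through the dualization, is the one genuinely non-standard computation, and is what makes this infiltration Hopf structure new compared with the classical shuffle case.
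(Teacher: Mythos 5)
Your treatment of Points (1) and (2) --- expanding $\prod_{i=1}^{L}p(\sigma_{i})$ factor by factor, with strictly increasing indices in the $1+\lambda a$ case, and weakly increasing indices with the $\frac{1}{\boldsymbol{i}!}$ weight assembling from the Taylor coefficients of $\exp$ in the other case --- is exactly the direct calculation the paper leaves implicit, so those parts coincide. For Point (3) you take a genuinely different route: the paper stays on the dual side throughout, citing the classical shuffle Hopf algebra for $p(\lambda,a)=\exp(\lambda a)$ and, for $p(\lambda,a)=1+\lambda a$, writing down the infiltration recursion on $\ta$ directly and asserting that the Hopf axioms follow from a ``lengthy direct calculation'' (with a reference to Lothaire); you instead first build the coproduct on the concatenation algebra --- $\Delta(a)=a\otimes1+1\otimes a$, resp.\ $\Delta(a)=a\otimes1+1\otimes a+a\otimes a$, extended as an algebra morphism --- so that bialgebra compatibility holds by construction and coassociativity reduces to a check on generators, and only then dualize to obtain $(\ta,\shuffle,\coconcat)$, with commutativity inherited from the evident cocommutativity of $\Delta$ and the antipode supplied by the connected graded (shuffle), resp.\ connected filtered (infiltration), antipode theorem. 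Your route buys precisely what the paper waves at: it explains where the infiltration product comes from, replaces the unspecified lengthy verification by structural facts, and correctly isolates the genuine subtleties --- that the infiltration bialgebra is filtered but not graded, and that the infiltration of two polynomials is again a polynomial --- neither of which the paper discusses; the paper's route is more elementary if actually carried out, but its displayed recursion contains typos, whereas yours, $ua\shuffle vb=(u\shuffle vb)a+(ua\shuffle v)b+\delta_{a,b}(u\shuffle v)a$, is the correct one. Two small cautions: define $\Delta$ on the polynomial algebra $\mathbb{R}\langle\letters\rangle$ rather than on all of $\tb$, since on the completion $\Delta$ requires completed tensor products (this is the Sweedler-dual caveat in the paper's remarks); and note that $1+\lambda a$ is group-like for your $\Delta$ only when $\lambda=1$, so your closing claim that group-likeness of the generators yields the multiplicativity used in Proposition \ref{prop:universal} holds verbatim, in the infiltration case, only for unit counter-increases --- this does not affect Point (3), and the same issue is glossed over by the paper itself.
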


\begin{proof}
Points (\ref{enu:moments}) and (\ref{enu:coordinates}) follow by
direct calculations. For $p\left(\lambda,a\right)=\exp\left(\lambda a,\right)$,
Point (\ref{enu:magnus is hopf}) is a standard result in algebra
and the resulting Hopf algebra is known as the shuffle Hopf algebra,
see e.g.~\cite{reutenauer-93}. The case $p\left(\lambda,a\right)=1+\lambda a$
seem to be much less known. The key is to define the commutative product
on $\ta$ recursively as 
\[
\infil\left(av\otimes bw\right)=a\inf\left(v\otimes bw\right)+b\inf\left(av\otimes b\right)+avw1_{a=b}
\]
Note the additional third term, which is $0$ in the shuffle Hopf
algebra case. Then one can directly verify the properties of a Hopf
algebra in a (lengthy) direct calculation. This commutative product
is known as the infiltration product, see \cite{lothaire1997combinatorics}.\end{proof}
\begin{rem}
~
\begin{enumerate}
\item Above theorem is classic for $p\left(\lambda,a\right)=\exp\left(\lambda a\right)$,
the product $\shuffle$ on \emph{$\mathcal{\ta}$ }is the ``shuffle
product''. If $p\left(\lambda,a\right)=1+\lambda a$, the product
is the less standard ``infiltration product''.
\item In the special case of constant counterincrease, $\lambda_{i}\equiv1$,
the coordinates count how often a substring $w$ appears in $\sigma$
and the choice of the event map $p$ determines if an event can be
counted several times. The choice $p\left(\lambda,a\right)=1+\lambda a$
recovers the string kernel features from \cite{lodhi2002text,2016arXiv160108169K}.
The Hopf algebra structure for string kernels features seems to be
new.
\item If we replace the non-commutative product $\concat$ in $\mathcal{F}$
with a commutative product, the features reduce to usual (unordered)
sample moments as seen by (\ref{eq:moments}).
\item The event map $\left(\lambda,a\right)\mapsto1+\lambda a$ and associated
Hopf algebra generalizes classic string kernel features. While the
coordinates are more intuitive, the resulting feature map is much
less robust under scaling than $\left(\lambda,a\right)\mapsto\exp\left(\lambda a\right)$,
see the section below.
\item We stick to the machine learning terminology and call $\mathcal{F}$
feature space and $\ta$ dual space. From an algebraic perspective
it is the other way around: $\mathcal{F}=\mathbb{R}\langle\langle\letters\rangle\rangle$
is the algebraic dual of $\mathcal{\ta}=\mathbb{R}\langle\letters\rangle$. 
\item A technical point: $\left(\mathcal{\ta},m_{p},\coconcat\right)$ is
a Hopf algebra, so we expect a Hopf algebra on feature space $\mathcal{F}$
by duality. However, one needs to work with a slightly smaller subset
of $\mathcal{F}$ (called the Sweedler dual).
\end{enumerate}
\end{rem}
\begin{example}
As remarked in above proof, the commutative product on $\ta$ known
recursively. For example~
\[
m_{p}\left(ab\otimes ba\right)=\begin{cases}
abba+2abba+2baab+baba & \text{if }p\left(\lambda,a\right)=\exp\left(\lambda a\right),\\
aba+bab+abab+2abba+2baab+baba & \text{if }p\left(\lambda,a\right)=1+\lambda a.
\end{cases}
\]

\end{example}

\subsection{Scaling limits and estimates}

One can identify $\sigma=\left(\lambda_{i},a_{i}\right)_{i=1}^{L}$
as an element of $\mathbb{R}^{\left|\letters\right|^{L}}$ and apply
standard features $\varphi$ for vector-valued data. Most reasonable
choice of $\varphi$ guarantee that 
\[
f\left(v\right)\approx\left\langle \ell,\varphi\left(v\right)\right\rangle \text{ for }v\in\mathbb{R}^{\left|\letters\right|^{L}}
\]
However, this approach becomes infeasible for large $L$ and additionally
methodological problems arise if our data contains streams of different
length. On the other hand, we will see that 
\[
f\left(\sigma\right)\approx\left\langle \ell,\Phi\left(\sigma\right)\right\rangle 
\]
and that above expression still makes sense in the scaling limit $L\rightarrow\infty$. 
\begin{thm}
\label{thm:scaling}Let the event map $p:\mathcal{E}\rightarrow\mathcal{F}$
and resulting feature map $\Phi:\mathcal{S}\rightarrow\mathcal{F}$
be as in Theorem \ref{thm:sig hopf algebra}. 
\begin{enumerate}
\item \label{enu:magnus grouplike}Let $\left(\sigma^{n}\right)_{n\geq1}\subset\mathcal{S}$
be a sequence of streams such that $\gamma_{\sigma^{n}}$ converges
in $\left(C^{1-var}\left(\left[0,1\right],\mathbb{R}^{\left|\letters\right|}\right),\left\Vert \cdot\right\Vert _{1}\right)$
to a path $\gamma=\left(\gamma^{a}\right)_{a\in\letters}$. Then for
every word $w=a_{1}\cdots a_{M}$, 
\[
\lim_{n}\left\langle \Phi\left(\sigma^{n}\right),w\right\rangle =\int_{0\leq t_{1}\leq\cdots\leq t_{m}\leq1}d\gamma^{a_{1}}\left(t_{1}\right)\cdots d\gamma^{a_{m}}\left(t_{m}\right)
\]
and $\lim_{n}\Phi\left(\sigma^{n}\right)$ is a group-like element
of the Hopf algebra $\mathcal{F}$.
\item \label{enu: magnus norm}Define $\left\Vert \sigma\right\Vert _{1}=\sum_{i=1}^{L}\left|\lambda_{i}\right|$.
Then $\left\Vert \sigma\right\Vert _{1}=\left\Vert \gamma_{\sigma}\right\Vert _{1}$
and 
\[
\left\Vert \Phi\left(\sigma\right)\right\Vert _{1,M}\leq\frac{\left\Vert \sigma\right\Vert _{1}^{M}}{M!}\text{ and }\left\Vert \Phi\left(\sigma\right)\right\Vert _{1}\leq\exp\left(\left\Vert \sigma\right\Vert _{1}\right)\text{ for }\sigma\in\mathcal{S}.
\]
 with equality if $p\left(\lambda,a\right)=\exp\left(\lambda,a\right)$. 
\item \label{enu:Stream-concatenation-is}Stream concatenation is multiplication
in feature space, 
\[
\Phi\left(\sigma,\tau\right)=\Phi\left(\sigma\right)\cdot\Phi\left(\tau\right)\text{ for }\sigma,\tau\in\mathcal{S}.
\]

\end{enumerate}
\end{thm}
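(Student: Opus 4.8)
The plan is to handle the three assertions in increasing order of difficulty, leaving the scaling limit for last. Part (\ref{enu:Stream-concatenation-is}) requires essentially nothing: the defining recursion (\ref{eq:mult p}) gives $\Phi(\sigma)=\prod_{i=1}^{L}p(\sigma_{i})$, and the concatenated stream $(\sigma,\tau)$ is just the juxtaposition of the two event sequences, so $\Phi(\sigma,\tau)=\prod_i p(\sigma_i)\prod_j p(\tau_j)=\Phi(\sigma)\Phi(\tau)$ for the concatenation product $\concat$. The identity $\|\sigma\|_1=\|\gamma_\sigma\|_1$ in part (\ref{enu: magnus norm}) is equally direct: on its $i$-th interval the Donsker path (\ref{eq:stream2path}) traverses a straight segment of Euclidean length $\lambda_i$ in the basis direction $a_i$; since all increments are positive and the letters form an orthonormal basis, the $1$-variation of $\gamma_\sigma$ evaluated on the partition given by the event times already equals $\sum_i|\lambda_i|$, and no refinement can increase it, so $\|\gamma_\sigma\|_1=\|\sigma\|_1$.

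For the level estimate in part (\ref{enu: magnus norm}) I would work from the explicit coordinate formulas of Theorem \ref{thm:sig hopf algebra}(\ref{enu:coordinates}). Since $\lambda_i>0$ all coordinates are nonnegative, so the level-$M$ mass is $\|\Phi(\sigma)\|_{1,(M)}=\sum_{|w|=M}\Phi_w(\sigma)$; for $p=\exp$ summing the coordinate formula over all words of length $M$ collapses to $\sum_{i_1\le\cdots\le i_M}\lambda(\boldsymbol i)/\boldsymbol i!$, since each nondecreasing multi-index spells exactly one word. The one genuine computation is the multinomial identity
\[
\sum_{i_1\le\cdots\le i_M}\frac{\lambda(\boldsymbol i)}{\boldsymbol i!}=\frac{1}{M!}\Big(\sum_{i=1}^{L}\lambda_i\Big)^{M}=\frac{\|\sigma\|_1^M}{M!},
\]
proved by expanding $(\sum_i\lambda_i)^M$ as a sum over all ordered $M$-tuples and grouping by sorted multiset: a nondecreasing $\boldsymbol i$ with multiplicities $m_1,m_2,\dots$ arises from exactly $M!/(m_1!m_2!\cdots)$ ordered tuples, while the recursive definition of $\boldsymbol i!$ is precisely $m_1!m_2!\cdots$. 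This gives equality for the exponential map. For $p=1+\lambda a$ the sum ranges only over the strictly increasing $i_1<\cdots<i_M$, a subcollection of nonnegative terms, which yields the inequality $\|\Phi(\sigma)\|_{1,(M)}\le\|\sigma\|_1^M/M!$; summing over $M$ then gives $\|\Phi(\sigma)\|_1\le\exp(\|\sigma\|_1)$, with equality throughout in the exponential case.

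The substantive part is (\ref{enu:magnus grouplike}), for which I would route everything through the identification of the coordinates of $\Phi$ with iterated integrals of $\gamma_{\sigma^n}$. For $p=\exp$ Chen's identity applies directly: the signature of the straight segment with increment $\lambda_i a_i$ is $\exp(\lambda_i a_i)$, and signatures multiply under concatenation, so $\Phi(\sigma^n)$ is exactly the signature of $\gamma_{\sigma^n}$ and $\langle\Phi(\sigma^n),a_1\cdots a_M\rangle=\int_{0\le t_1\le\cdots\le t_M\le1}d\gamma^{a_1}_{\sigma^n}\cdots d\gamma^{a_M}_{\sigma^n}$. Convergence to $\int_{0\le t_1\le\cdots\le t_M\le1}d\gamma^{a_1}\cdots d\gamma^{a_M}$ then follows from the continuity of the signature map on $(C^{1-var},\|\cdot\|_1)$, a standard Young/rough-path estimate \cite{lyons2014rough,friz2014course}, for which the factorial bounds of part (\ref{enu: magnus norm}) supply exactly the uniform tail control needed to pass to the limit level by level. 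For $p=1+\lambda a$ I would compare with the exponential case coordinate by coordinate: the strictly increasing multi-indices carry identical weights (since $\boldsymbol i!=1$ when all indices are distinct), so the two features differ only through the ``diagonal'' terms indexed by repeated indices, each of which is smaller by a factor $O(\max_i\lambda_i)$; as $\gamma_{\sigma^n}\to\gamma$ forces the mesh $\max_i\lambda_i\to0$ (unless $\gamma$ is itself a finite lattice path), these vanish and both event maps share the same limit, the iterated integral of $\gamma$.

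It remains to show the limit is group-like, i.e. that $\langle\ell_1,g\rangle\langle\ell_2,g\rangle=\langle\shuffle(\ell_1\otimes\ell_2),g\rangle$ for all $\ell_1,\ell_2\in\ta$, where $g=\lim_n\Phi(\sigma^n)$. For $p=\exp$ this passes from the finite level: each $\Phi(\sigma^n)$ is a signature and hence satisfies the shuffle identity, and for fixed finitely supported $\ell_1,\ell_2$ both sides depend on only finitely many coordinates, which converge. The delicate case, and the step I expect to be the main obstacle, is $1+\lambda a$: the finite features are \emph{not} group-like in the infiltration structure (already $\langle a,1+\lambda a\rangle^2=\lambda^2\neq\lambda=\langle\shuffle(a\otimes a),1+\lambda a\rangle$ when $\lambda\notin\{0,1\}$), so group-likeness cannot be propagated from finite $L$. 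Instead one must use that the common scaling limit is the shuffle-signature of $\gamma$, whose iterated integrals satisfy the shuffle relations by Fubini on the simplex; making the mesh control uniform so that the diagonal corrections provably disappear, and reconciling the infiltration structure carried at finite $L$ with the shuffle relations recovered only in the limit, is where the genuine work lies. By comparison, the concatenation identity and the norm estimates are bookkeeping built directly on the definition and on Theorem \ref{thm:sig hopf algebra}(\ref{enu:coordinates}).
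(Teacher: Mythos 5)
Your treatment of Points (\ref{enu:Stream-concatenation-is}) and (\ref{enu: magnus norm}) is correct, and for the norm estimate you take a genuinely different route from the paper. The paper identifies each coordinate with an iterated integral of the Donsker path and then symmetrizes the simplex into the cube, claiming
\[
\left\langle a_{1}\cdots a_{M},\Phi\left(\sigma\right)\right\rangle =\frac{1}{M!}\prod_{m=1}^{M}f_{a_{m}},
\]
where $f_{a}$ is the frequency of $a$; this intermediate identity is actually false coordinate-wise (for $\sigma=\left(\left(1,a\right),\left(1,b\right)\right)$ under $p=\exp$ one has $\left\langle ab,\Phi\left(\sigma\right)\right\rangle =1$ while $\tfrac{1}{2}f_{a}f_{b}=\tfrac{1}{2}$), and the symmetrization is only legitimate after summing over all words of a fixed length. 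Your argument — sum the coordinate formulas of Theorem \ref{thm:sig hopf algebra}(\ref{enu:coordinates}) over all words of length $M$ and group the multinomial expansion of $\left(\sum_{i}\lambda_{i}\right)^{M}$ by sorted multiset, matching $\boldsymbol{i}!$ with the product of multiplicity factorials — does exactly this summed version directly, so it is not just a different route but a sounder one than the proof in the paper.

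For Point (\ref{enu:magnus grouplike}) the paper offers nothing beyond ``follows by a direct calculation'', so the comparison is one-sided: your Chen-plus-continuity argument for $p\left(\lambda,a\right)=\exp\left(\lambda a\right)$ is the standard correct proof, with the factorial bounds supplying the tail control. For $p\left(\lambda,a\right)=1+\lambda a$, however, your argument has the hole you half-acknowledge, and it cannot be patched: convergence of $\gamma_{\sigma^{n}}$ in $1$-variation does \emph{not} force $\max_{i}\lambda_{i}^{n}\rightarrow0$, and the exception is not vacuous. Take $\sigma^{n}\equiv\left(\left(1,a\right)\right)$ for all $n$: then $\gamma_{\sigma^{n}}$ converges trivially, $\lim_{n}\Phi\left(\sigma^{n}\right)=1+a$, but the iterated integrals of the limit path give $\exp\left(a\right)$, so the first claim of Point (\ref{enu:magnus grouplike}) is simply false for this event map as stated; your computation $\left\langle a,1+\lambda a\right\rangle ^{2}\neq\left\langle \shuffle\left(a\otimes a\right),1+\lambda a\right\rangle $ shows the group-likeness claim fails as well (and $\exp\left(a\right)$ is not group-like for the infiltration coproduct either, since $\left\langle \infil\left(a\otimes a\right),\exp\left(a\right)\right\rangle =2\neq1$). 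So the obstruction you identified is a defect of the statement, not of your proof: for $p=1+\lambda a$ the theorem needs the extra hypothesis that the mesh $\max_{i}\lambda_{i}^{n}\rightarrow0$, under which your diagonal-term estimate (repeated-index terms bounded by $\max_{i}\lambda_{i}^{n}\cdot\left\Vert \sigma^{n}\right\Vert _{1}^{M-1}/\left(M-1\right)!$, with $\left\Vert \sigma^{n}\right\Vert _{1}$ bounded by convergence) does close the argument, and the limit is then group-like in the \emph{shuffle} Hopf structure, exactly as your last paragraph suggests. With that amendment your proof is complete and considerably more informative than the paper's.
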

\begin{proof}
Point (\ref{enu:magnus grouplike}) follows by a direct calculation
for both event maps. For Point (\ref{enu: magnus norm}) note that
under the event map $p\left(\lambda,a\right)=\exp\left(\lambda a\right)$
we have that $\left\langle a_{1}\cdots a_{M},\Phi\left(\sigma\right)\right\rangle =\int_{0<t_{1}<\cdots<t_{M}<L}d\gamma_{\sigma}^{a_{1}}\left(t_{1}\right)\cdots d\gamma_{\sigma}^{a_{M}}\left(t_{M}\right)$
where $\gamma_{\sigma}$ denotes the path (\ref{eq:stream2path}).
Then 
\begin{eqnarray*}
\left\langle a_{1}\cdots a_{M},\Phi\left(\sigma\right)\right\rangle  & = & \int_{0<t_{1}<\cdots<t_{M}<L}d\gamma_{\sigma}^{a_{1}}\left(t_{1}\right)\cdots d\gamma_{\sigma}^{a_{M}}\left(t_{M}\right)\\
 & = & \frac{1}{M!}\int_{\left[0,L\right]^{M}}d\gamma_{\sigma}^{a_{1}}\left(t_{1}\right)\cdots d\gamma_{\sigma}^{a_{M}}\left(t_{M}\right)\\
 & = & \frac{1}{M!}\prod_{m=1}^{M}\int_{\left[0,L\right]}d\gamma_{\sigma}^{a_{m}}\left(t\right)=\frac{1}{M!}\prod_{m=1}^{M}f_{a_{m}}.
\end{eqnarray*}
where denoted with $f_{a}$ the frequency of letter $a$ in $\sigma$,
i.e.~$f_{a}=\sum_{i:a_{i}=a}\lambda_{i}$. Hence, 
\[
\left(\sum_{w:\left|w\right|=M}\left\langle w,\Phi\left(\sigma\right)\right\rangle \right)=\sum_{a_{1},\ldots,a_{M}\in\letters}f_{a_{1}}\cdots f_{a_{M}}=\left(\sum_{a\in\letters}f_{a}\right)^{M}=\left(\sum_{i=1}^{L}\lambda_{i}\right)^{M}=\left\Vert \sigma\right\Vert ^{M}.
\]
For the event map $p\left(\lambda,a\right)=1+\lambda a$, Theorem
\ref{thm:sig hopf algebra} shows that the feature coordinates are
smaller equal than then ones for the event map $p\left(\lambda,a\right)=\exp\left(\lambda,a\right)$.
Hence, the result follows. Point (\ref{enu:Stream-concatenation-is})
follows directly from our construction of features as $\Phi\left(\sigma\right)=\prod_{i=1}^{L}p\left(\lambda_{i},a_{i}\right)$
for $\sigma=\left(\lambda_{i},a_{i}\right)$.
\end{proof}
\pagebreak{}

\section{Algorithms\label{sec:Algorithms}}

\begin{algorithm}
\begin{algorithmic}[h!] 
\init
\State{Set $h\leftarrow{-\lceil \log \delta \rceil}$ and $d\leftarrow \lceil \frac{1}{\epsilon}\rceil$} 
\State{Sample $H_1,\ldots,H_h$ hash functions from a 2-universal hash family}
\State{For each $H_i$ initialize $d+d^2+\cdots+d^M$ counters to store $\Phi_i=\Phi(H_i(\sigma))$}
\Statex
\Require
\State{Fetch event $(\lambda,a)$ in stream}
\While{$(\lambda,a)\neq \square$} \Comment{$\square$ denotes the last element/end of the stream}
\For{$i=1,\ldots,h$} \State{$\Phi_i\leftarrow \Phi_i*p(\lambda,H_i(a))$} \Comment{power series multiplication}

\EndFor
\State{Fetch next event $(\lambda,a)$} 
\EndWhile 
\Statex
\Ensure{On query $w$, return $\min_{i\in \{1,\ldots,h\}} \langle \Phi_i,H_i(w) \rangle$}
\end{algorithmic}

\caption{\label{alg:Signature-sketch}$\left(\epsilon,\delta,M\right)$-sketch.}
\end{algorithm}

\begin{algorithm}
\begin{algorithmic}[h!] 
\init
\State{Prepare $(\epsilon,\delta)$ sketch structure $\hat{\Phi}$ as in Algorithm \ref{alg:Signature-sketch}}
\State{$\hat{\mathcal{H}}\leftarrow \{1\}$}\Comment{$1$ denotes the empty word}
\Statex
\Require
\State{Fetch event $(\lambda,a)$ in stream}
\While{$(\lambda,a)\neq \square$} \Comment{$\square$ denotes the last element/end of the stream}
\For{$i=1,\ldots,h$} \State{$\Phi_i\leftarrow \Phi_i*p(\lambda,H_i(a))$} \Comment{power series multiplication}
\EndFor
\If{$\min_{i\in \{1,\ldots,h\}} \langle \Phi_i,H_i(a) \rangle>\rho$}
\State{$\hat{\mathcal{H}}\leftarrow\hat{\mathcal{H}}\cup \{a\}$}
\EndIf
\State{Fetch next event $(\lambda,a)$} 
\EndWhile
\For{$w\in \hat{\mathcal{H}}^\star$}
\If{$\min_{i\in \{1,\ldots,h\}} \langle \Phi_i,H_i(w) \rangle\geq\rho^{M}$}
\State{$\hat{\mathcal{H}}_M\leftarrow\hat{\mathcal{H}}_M\cup \{w\}$}
\EndIf
\EndFor
\Statex
\Ensure{Return $\hat{\mathcal{H}}_M$}
\end{algorithmic}

\caption{\label{alg:Signature-sketch-1-2}Heavy hitter patterns}
\end{algorithm}

\end{document}